\newtheorem{lemma}{Lemma}
\newtheorem{theorem}{Theorem}
\newtheorem{remark}{Remark}
\newtheorem{proposition}{Proposition}
\newtheorem{definition}{Definition}
\newtheorem{example}{Example}
\newcommand{\name}[0]{\mbox{WEGP}\xspace}
\providecommand\thanks[1]{%
  \footnotemark
  \protected@xdef\@thanks{\@thanks\protect\footnotetext{#1}}%
}
\runningtitle{Weighted Euclidean Distance Matrices over Mixed Inputs for Gaussian Process Models}
\begin{document}

\twocolumn[
\aistatstitle{Weighted Euclidean Distance Matrices over Mixed Continuous and Categorical Inputs for Gaussian Process Models}
\aistatsauthor{Mingyu Pu$^{1,}$\textsuperscript{*}  \And Songhao Wang$^{2,}$\textsuperscript{*}\And Haowei Wang$^{1,}\textsuperscript{$\dagger$}$ \And Szu Hui Ng$^1$}
\aistatsaddress{$^1$Department of ISEM,\\
    National University of Singapore  \And $^2$College of Business,\\
     Southern University of Science and Technology}]

\begin{abstract}
Gaussian Process (GP) models are widely utilized as surrogate models in scientific and engineering fields. However, standard GP models are limited to continuous variables due to the difficulties in establishing correlation structures for categorical variables. To overcome this limitati on, we introduce \textbf{WE}ighted Euclidean distance matrices \textbf{G}aussian \textbf{P}rocess (\name). \name constructs the kernel function for each categorical input by estimating the Euclidean distance matrix (EDM) among all categorical choices of this input. The EDM is represented as a linear combination of several predefined base EDMs, each scaled by a positive weight. The weights, along with other kernel hyperparameters, are inferred using a fully Bayesian framework.
We analyze the predictive performance of \name theoretically. 
Numerical experiments validate the accuracy of our GP model, and by \name, into Bayesian Optimization (BO), we achieve superior performance on both synthetic and real-world optimization problems. The code is available at: \url{https://github.com/pmy0124nus/WEGP}.

\end{abstract}

\section{INTRODUCTION}

Real-world engineering and scientific challenges often involve creating surrogate models that handle mixed-input problems (including continuous and categorical inputs) using limited training data. These models need to make accurate predictions and quantify the predictive uncertainty. For example, in material design, the goal is to find atomic structures that display specific properties like mechanical strength \citep{oune2021latent}. These structures involve both categorical variables (e.g., material type) and continuous variables (e.g., temperature and pressure).
GP models~\citep{rasmussen2003gaussian} are commonly used due to their flexibility, accurate outcome prediction, and the ability to quantify uncertainty~\citep{tuo2022uncertainty,stephenson2022measuring}. They are particularly effective as surrogate models in Bayesian Optimization (BO), a framework often applied for optimizing expensive black-box functions where data is costly or time-consuming to obtain. BO has been successfully applied to mixed-input problems, including selecting chemical compounds~\citep{hernandez2017parallel}, tuning hyperparameters for machine learning models~\citep{snoek2012practical,papenmeier2023bounce}, reinforcement learning~\citep{scannell2023mode} and conducting neural architecture searches~\citep{kandasamy2018neural,nguyen2021optimal,ru2020interpretable}.

Standard GP models are mainly designed for continuous inputs. They typically rely on a kernel function, reflecting the spatial correlation between these inputs, to quantify the similarity between continuous inputs based on some distance metric, such as the Euclidean distance. However, categorical inputs do not have a spatial structure, and the distance metric needs to be redefined.
To address this challenge, a common approach is to encode categorical variables into continuous representations~\citep{zhang2020latent,deshwal2021combining,oune2021latent}.

\begingroup
\renewcommand\thefootnote{*} 
\footnotetext{Equal contribution.}
\endgroup
\begingroup
\renewcommand\thefootnote{$\dagger$}
\footnotetext{Corresponding author: \href{mailto:haowei_wang@u.nus.edu}{haowei\_wang@u.nus.edu}}
\endgroup

We propose \textbf{WE}ighted Euclidean Distance Matrices \textbf{GP} (\name), a novel approach for capturing correlation between mixed-type inputs.
Unlike traditional methods that rely on proper encoders to capture spatial correlation, \name focuses on learning the Euclidean distance matrix (EDM) for every categorical input directly. For instance, suppose there is a categorical input $h$, taking values $h_a, h_b$ and $h_c$. \name learns a $3\times3$ distance matrix for $h$, which measures distance between every possible pair of values from $h$. When building the GP model, the distance in dimension $h$ between any two input points can then be obtained from this matrix. 
In \name, the EDM is represented as a positive linear combination of several predefined base EDMs, representing different relationships between categories by providing a distance structure. 
Base EDMs can be derived through various methods. We use two methods to generate base EDMs in this paper: the first method uses ordinal encoders to assign numerical values to categories based on their order, capturing a simple linear structure of distances among categories. The second method uses extreme direction matrices, which represent the edges of the EDM cone, allowing the representation of any EDM. By learning the weights for every base EDM, \name\ captures the importance of these diverse distance patterns, providing a flexible way to model complex relationships between categories. \name adopts a fully Bayesian inference to automatically determine the importance of every structured correlation pattern.

Our theoretical analysis demonstrates that our GP model's posterior mean converges to the underlying black-box function. We further derive its convergence rate, showing that the convergence rate depends on the smoothness of the actual underlying function and the correlation function of the GP model.
Numerical experiments validate the accuracy of our GP model.
We also integrate \name into BO and evaluate it on several synthetic and real-world optimization problems, demonstrating state-of-the-art performance on optimization problems.
Our specific contributions include:

\begin{enumerate}
    \item We develop \name for mixed-type input space. \name learns the distance pattern for each categorical input as a weighted sum of several base EDMs to improve model fitting.
    \item We propose a fully Bayesian inference for WEGP. It can automatically determine the importance of every structured correlation pattern when data is limited or sufficient.
    \item We perform a theoretical analysis for the convergence rate of \name, showing that the posterior mean of \name converges to the underlying black-box function.
    \item A comprehensive experimental evaluation on a diverse set of mixed BO datasets demonstrates the effectiveness of \name.
\end{enumerate}

\section{BACKGROUND}

\textbf{Gaussian Process.} GP~\citep{rasmussen2003gaussian} is a non-parametric Bayesian framework for modeling unknown functions, widely used in regression and classification tasks. 
A GP is defined by its mean function $\mu(x)$ and covariance function $\Sigma(x, x^{\prime})$, which determine the properties of the functions it models. Specifically, for any finite set of input points $\mathbf{x}=[x_1, \ldots, x_n]^{\top}$, the corresponding function values $\mathbf{f}=[f(x_1), \ldots, f(x_n)]^{\top}$ are assumed to follow a joint Gaussian distribution:
$$
\mathbf{f} \sim \mathcal{N}(\boldsymbol{\mu}_0, \Sigma_0)
$$
where $\mu_0$ is the mean vector, $\Sigma_0$ is the $n \times n$ covariance matrix defined by the chosen covariance function. The conditional distribution of $\mathbf{f}$ given these observations can be computed using Bayes' rule~\citep{frazier2018tutorial}. Common covariance functions include the Gaussian (squared exponential) kernel,
$$\Sigma\left(x, x^{\prime}\right)=\sigma^2 \exp \left(-\frac{\left\|x-x^{\prime}\right\|^2}{2 l^2}\right)$$

where $\sigma^2$ is the process ariance, $l$ is the length-scale. 

The hyperparameters of the covariance function are typically optimized by maximizing the marginal likelihood.

\textbf{Mixed input space.} In many practical applications, input data often consists of both continuous and categorical components, referred to as mixed input. This mixed input space can be mathematically represented as $\mathcal{Z}=\mathcal{X} \times \mathcal{H}$, where $\mathcal{X}=\left\{x_1, x_2, \ldots, x_d\right\}$ denotes the set of continuous inputs, and $\mathcal{H}=\left\{h_1, h_2, \ldots, h_c\right\}$ represents the set of categorical inputs. Consequently, each mixed input $\mathbf{z}$ can be expressed as $\mathbf{z}=(\mathbf{x}, \mathbf{h})$, combining both continuous and categorical elements. 

\textbf{Bayesian optimization.}

BO~\citep{brochu2010tutorial,shahriari2015taking,nguyen2020knowing} is an advanced extension of GP for the optimization of black-box functions that have numerous important application~\citep{korovina2020chembo,dreczkowski2024framework,dai2024batch}. 
BO leverages GP as a surrogate model to approximate the unknown objective function $f$. The iterative process of BO consists of two key steps: (1) fitting the GP given the observed data to update the posterior distribution of $f$; and (2) using the posterior to define an acquisition function $\alpha_t(\mathbf{x})$. The next sample point is determined by optimizing the acquisition function, $\mathrm{x}_t=$ $\arg \max _{x \in \mathcal{X}} \alpha_t(\mathbf{x})$. The acquisition function is computationally inexpensive to optimize, allowing BO to efficiently explore the search space, in contrast to direct optimization of the more costly objective function $f(\mathbf{x})$.

\section{RELATED WORK}
\label{sec:literature review}

\textbf{Non-GP-based surrogate models with mixed inputs.} Regression models offer effective approaches for handling mixed input types through the use of dummy variables and encoding techniques like one-hot encoding and contrasts encoding~\citep{box1992experimental, wei2008multi,hu2008optimization,naceur2006response,jansson2003using}. MiVaBO~\citep{daxberger2019mixed, baptista2018bayesian} employs a Bayesian linear regressor that captures discrete features using the BOCS~\citep{baptista2018bayesian} and continuous features through random Fourier features, incorporating pairwise interactions between them. MVRSM~\citep{bliek2021black} combines linear and ReLU units to handle mixed inputs efficiently. Some optimization models are specifically designed to handle mixed input types, making them suitable for scenarios involving both continuous and categorical variables. iDONE~\citep{bliek2021black} utilizes piece-wise linear models, offering simplicity and computational efficiency. Random forests (RFs)~\citep{breiman2001random}, employed in method SMAC~\citep{hutter2011sequential}, can naturally accommodate continuous and categorical variables. RFs are robust but can overfit easily, so the number of trees needs to be chosen carefully to balance model complexity and performance.
Another tree-based approach is the Tree Parzen Estimator (TPE)~\citep{bergstra2011algorithms} utilizes non-parametric Parzen kernel density estimators (KDE). 
By taking advantage of KDE's properties, TPE is capable of effectively managing both continuous and discrete variables~\citep{zaefferer2018surrogate}.

\textbf{GP-based Surrogate Models with Mixed Inputs.} Building metamodels for mixed input types is an emerging area for GP models, with different approaches varying in complexity. The complexity of these models depends largely on how categorical variables are handled in the kernel. More parameters enable the kernel to capture complex relationships between different categorical choices. The following methods combine the kernel for each categorical variable through multiplication. For simplicity, we analize kernel for one categorical variable with \( K \) categories.
The simplest approach is using Gower distance~\citep{halstrup2016black}, which combine Euclidean distance for continuous variables with Hamming distance for categorical variables. This method introduces a single parameter per categorical variable and is used in frameworks like COCABO~\citep{ru2020bayesian}. 
One-Hot Encoding is a more expressive approach, representing \( K \) categorical choices as a \( K \)-dimensional binary vector. This introduces \( K \) correlation parameters into the kernel, enabling the model to capture richer relationships. It is widely used in various studies~\citep{golovin2017google, garrido2020dealing, gonzalez2016batch, snoek2012practical}.
BODI ~\citep{deshwal2023bayesian} refines this by mapping One-Hot encoding into a lower-dimensional feature space, reducing the parameters to $m$ ($\leq K$).
More accurate methods like LVGP~\citep{zhang2020latent} and LMGP~\citep{oune2021latent} encode categories into an continuous space, the complexity and the accuracy of the model depends on the dimension of the continuous space.\\
All the method above still lack accuracy and considered as approximation model for categorical inputs. The accurate GP model can capture all $K(K-1)/2$ pairwise relationship between the categorical choices, for example, HH~\citep{zhou2011simple}, EHH~\citep{saves2023mixed} and UC~\citep{qian2008gaussian} has $K(K-1)/2$ parameters in the kernel. The WEGP method can be regarded as a generalization of the Unrestrictive Covariance (UC) approach. Initially proposed by \citep{qian2008gaussian}, the UC method directly estimates the $K \times K$ correlation matrix for categorical, requiring $K(K-1) / 2$ parameters to capture all pairwise relationships. To ensure the positive definiteness of the correlation matrix, the original UC method employed semidefinite programming. Subsequently, \citep{zhou2011simple} introduced the HH method, which simplified the estimation process using hypersphere decompositions, and \citep{zhang2015computer} further refined the approach by utilizing indicator variables within the Gaussian correlation function, thereby eliminating the need for semidefinite programming. WEGP also directly estimates the correlation matrix; however, it does so by employing a weighted combination of base Euclidean Distance Matrices (EDMs). Each base EDM is positive definite, and as long as the weights are non-negative, the resulting weighted matrix remains positive definite. This formulation enables the use of standard techniques such as Maximum Likelihood Estimation (MLE), Maximum A Posteriori (MAP), or fully Bayesian methods under simple non-negativity constraints. Moreover, the parameterization in WEGP offers improved interpretability, as each weight reflects the importance of a particular similarity relationship, which facilitates the use of sparsity-inducing priors in Bayesian frameworks. This advantage is particularly significant in settings with limited training data, where estimating $K(K-1) / 2$ parameters-as required by the UC method-may lead to substantial estimation errors.

\section{\name}
\label{sec:model}

\textbf{Problem statement.} We consider building a GP model for an unknown function $y(\cdot)$ over mixed inputs $\mathbf{z}$. Here, $y(\cdot)$ is the response value; $\mathbf{z}=(\mathbf{x}, \mathbf{h}) \in \mathcal{Z}$ is the design vector, where $\mathbf{x}=(x^{(1)}, x^{(2)}, \ldots, x^{(d)})$ is a vector contain values for $d$ continuous variables, $\mathbf{h}=(h^{(1)}, h^{(2)}, \ldots, h^{(c)})$ is a vector that contains the values for $c$ categorical variables, and $\mathcal{Z}$ is mixed input domain. For the $k^{th}$ categorical variable, it contains $c_k$ different categorical choices. We construct the relationship between $y$ and the mixed inputs $\mathbf{z}$ by:
\begin{equation}
    y(\mathbf{z})=\mu+G(\mathbf{z}),
\end{equation}
where $\mu$ is the constant prior mean, and $G(\mathbf{z})$ is a zero-mean GP with covariance function:
\begin{equation}
    K(\cdot, \cdot)=\sigma_0^2 R(\cdot, \cdot),
\end{equation}
where $\sigma_0^2$ is the process variance.

\subsection{Direct Euclidean Distance Matrix Estimation}
To extend GP models from continuous to categorical input spaces, we propose constructing a GP kernel for categorical inputs by directly learning the Euclidean distance matrix between categories. First, we introduce the definition of Euclidean distance matrix in mathematics:
\begin{definition}
A Euclidean distance matrix is an $n \times n$ matrix representing the spacing of a set of $n$ points in Euclidean space. For points $x_1, x_2, \ldots, x_n$ in $k$-dimensional space $\mathbb{R}^k$, the elements of their Euclidean distance matrix $D$ are given by squares of distances between them. That is
$$
\begin{aligned}
D & =(D_{i j}), \text{ where } D_{i j}  = &d_{i j}^2=\left\|x_i-x_j\right\|^2
\end{aligned}
$$
where $\|\cdot\|$ denotes the Euclidean norm on $\mathbb{R}^k$.
$$
D=\left[\begin{array}{ccccc}
0 & d_{12}^2 & \ldots & d_{1 n}^2 \\
d_{21}^2 & 0  & \ldots & d_{2 n}^2 \\

\vdots & \vdots & \ddots & \vdots \\
d_{n 1}^2 & d_{n 2}^2 & \ldots & 0
\end{array}\right].
$$
\end{definition}
In \name, the EDM for a categorical variable measures distances between every possible pair of categories. For $k^{th}$ categorical variable containing $c_k$ number of categories, the size of its EDM $D_k$ is $c_k \times c_k$. \name computes EDM $D_k$ through a positive linear combination of $m_k$ base EDMs: $D^{(1)}_k, D^{(2)}_k, \ldots, D^{(m_k)}_k$. The positive linear combination of base EDMs is still a valid EDM, which is guaranteed by the following proposition:
\begin{proposition}
Denote $m$ linearly independent $n \times n$ base EDMs as $D^{(1)}, D^{(2)}, \ldots, D^{(m)}$.
Define D as positive linear combination of the matrices $D^{(1)}, D^{(2)}, \ldots, D^{(m)}$:
$$
D=\sum_{i=1}^{m} w_i D^{(i)} \quad \text { where } \quad w_i \geq 0
$$
$D$ is also a valid Euclidean distance matrix.
\label{proposition:linearindependent}
\end{proposition}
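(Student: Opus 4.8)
The plan is to prove that $D$ is a valid EDM by explicitly constructing a point configuration in Euclidean space whose pairwise squared distances are exactly the entries of $D$. The starting point is the definition itself: since each $D^{(i)}$ is a valid EDM, there exist points $p^{(i)}_1, \dots, p^{(i)}_n \in \mathbb{R}^{k_i}$ (for some embedding dimension $k_i$) such that $D^{(i)}_{ab} = \lVert p^{(i)}_a - p^{(i)}_b \rVert^2$ for all $a,b$. The goal is then to assemble these $m$ realizations into a single configuration that realizes the weighted sum.

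First I would use the nonnegativity hypothesis $w_i \ge 0$: this is exactly what guarantees that $\sqrt{w_i}$ is real, so I can define, for each index $a \in \{1,\dots,n\}$, the concatenated point
\begin{equation}
q_a = \bigl(\sqrt{w_1}\, p^{(1)}_a,\ \sqrt{w_2}\, p^{(2)}_a,\ \dots,\ \sqrt{w_m}\, p^{(m)}_a\bigr) \in \mathbb{R}^{k_1 + \cdots + k_m}.
\end{equation}
Because the coordinate blocks are orthogonal, the squared Euclidean distance splits as a sum over blocks, and the $\sqrt{w_i}$ factors pull out as $w_i$:
\begin{equation}
\lVert q_a - q_b \rVert^2 = \sum_{i=1}^m w_i \, \lVert p^{(i)}_a - p^{(i)}_b \rVert^2 = \sum_{i=1}^m w_i\, D^{(i)}_{ab} = D_{ab}.
\end{equation}
This exhibits $q_1,\dots,q_n$ as a set of points in $\mathbb{R}^{k_1+\cdots+k_m}$ whose squared-distance matrix is precisely $D$, so $D$ satisfies the definition of an EDM.

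An equivalent, more intrinsic route is to invoke the characterization that a symmetric hollow matrix is a squared EDM if and only if its centered Gram matrix $G = -\tfrac12 J D J$ is positive semidefinite, where $J = I - \tfrac1n \mathbf{1}\mathbf{1}^\top$ is the centering matrix. Symmetry and the zero diagonal of $D$ follow immediately by linearity from the corresponding properties of the $D^{(i)}$. For the Gram condition, linearity of the map $D \mapsto -\tfrac12 JDJ$ gives $G = \sum_i w_i G^{(i)}$ with each $G^{(i)} \succeq 0$, and since the PSD cone is closed under nonnegative combinations, $G \succeq 0$; this is just the statement that the EDMs form a convex cone. I expect no serious obstacle here: the only point requiring care is that nonnegativity of the weights is essential (it supplies the real square roots in the first route and keeps us inside the PSD cone in the second), whereas the linear independence hypothesis is not actually needed for validity---it plays a role only in controlling the dimension of the spanned subcone and the identifiability of the weights, not in establishing that $D$ is an EDM.
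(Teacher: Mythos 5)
Your proof is correct, and your primary argument takes a genuinely different---and more elementary---route than the paper's. The paper works entirely at the level of Gram matrices: it invokes the Schoenberg-type characterization $G = -\tfrac{1}{2} H D H$ (with $H = I - \tfrac{1}{n}\mathbf{e}\mathbf{e}^\top$ the centering matrix), uses linearity of the map $D \mapsto -\tfrac{1}{2}HDH$ to write $G = \sum_i w_i G^{(i)}$, and concludes PSD-ness since the PSD cone is closed under nonnegative combinations; this is exactly your second, ``intrinsic'' route. Your first route instead builds an explicit realization by concatenating the scaled blocks $\sqrt{w_i}\,p^{(i)}_a$ and letting the squared norm split across orthogonal coordinate blocks. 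What the constructive route buys: it is self-contained from the bare definition of an EDM (no appeal to the Gram/Schoenberg characterization), it makes the role of $w_i \ge 0$ completely transparent (it is precisely what makes $\sqrt{w_i}$ real), and it yields an explicit embedding dimension bound $k_1 + \cdots + k_m$ for the realizing configuration. What the paper's route buys: brevity once the Gram-matrix machinery is assumed, and a formulation that sits naturally alongside the cone-theoretic language the paper uses later (the EDM cone and its extreme directions in Proposition 2). Your closing observation is also correct and worth emphasizing: linear independence of the $D^{(i)}$ is never used---not in your argument and not in the paper's either---since it governs only identifiability of the weights and the dimension of the spanned subcone, not the validity of $D$ as an EDM.
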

\begin{proof}
    See Appendix~\ref{subsec:edm}.
\end{proof}
A straightforward way to compute $D_k$ is to treat each element in the matrix as a variable and estimate $D_k$ element-wisely. However, we must ensure $D_k$ is a valid distance matrix. Thus, Schoenberg's theorem~\citep{schoenberg1935remarks} needs to be satisfied, which means estimating the element in EDM directly involves complex positive definite programming with several constraints.
In contrast, our method inherently guarantees a valid EDM by using the positive linear combination of base EDMs, thereby simplifying the optimization process in building the matrix.

Moreover, computing the EDM through a linear combination of predefined EDMs allows us to fully leverage existing structured distance information. Specifically, by optimizing the weighting coefficients, we can identify which distance information within the predefined EDMs is more crucial for constructing the kernel function. A base EDM represents a type of relationship between categories by providing a simplified distance structure. These base EDMs can be derived through various methods. In this work, we construct base EDMs using two methods.

\textbf{Construct base EDMs by ordinal encoders.} The ordinal encoder assigns numerical values to each category based on their order and then calculates the pairwise distances between categories. 
The base EDM is constructed accordingly. 
Here is an example to illustrate the construction of a base EDM using an ordinal encoder: 
\begin{example}
    Consider a categorical variable $h^{(1)}$ with three categories: $h_1^{(1)}=A$, $h_2^{(1)}=B$, and $h_3^{(1)}=C$. An ordinal encoder represents a mapping from the categories to ordinal values. For example, an ordinal encoder encodes $(A, B, C)$ as $(1,2,3)$. The base EDM it generates is: 
$$
D^{(1)}_1=\left(\begin{array}{lll}
0 & 1 & 4 \\
1 & 0 & 1 \\
4 & 1 & 0
\end{array}\right)
$$
This base EDM contains simple structured distance information that the distance between $A, B$, and $B, C$ is the same. The distance between $A, C$ is the largest among three pairwise distances, twice the distance between $A, B$ and $B, C$.
\end{example}

By permuting the encoding, we can generate different base EDMs corresponding to different distance patterns of the relative positions among the categories. We apply Algorithm \ref{alg:construct_basis_edms_multiple} to construct $m_k$ linear independent base EDMs for categorical input $h^{(k)}$ based on different ordinal encoders.

Using ordinal encoders to generate base EDMs allows us to preserve the inherent order of categories in their numerical representation. When the sample size is small, it becomes challenging to accurately determine the exact numerical distances between these categories due to the limited amount of data. Instead of relying on potentially unreliable distance measurements, we leverage the relative position information, which remains identifiable even with sparse data, making it a more practical approach in such scenarios.

\begin{algorithm}
\caption{Construct base EDMs with ordinal encoders.}
\label{alg:construct_basis_edms_multiple}
\textbf{Input:} $c$ categorical variables and \( m_k \) base EDMs for the $k^{th}$ categorical variable.
\begin{algorithmic}[1]

\For{each categorical variable \( h^{(k)}\) where \( k = 1, 2, \ldots, c \)}
    \State Initialize an empty set \( S_k \gets \emptyset \)
    \While{$|S_k| < m_k$}
        \State Randomly select a permutation for ordinal encoding
        \State Perform ordinal encoding based on the selected permutations
        \State Calculate EDM \( D_k^{(i)} \)
        \If{ \( D_k^{(i)} \) is linearly independent from all matrices in \( S_k \) }
            \State Add \( D_k^{(i)} \) to \( S_k \)
        \EndIf
    \EndWhile
    \State Store \( S_k \) for \( h^{(k)} \)
\EndFor
\State \textbf{Return} \( \{ S_1, S_2, \ldots, S_c \} \)
\end{algorithmic}
\end{algorithm}

\textbf{Construct base EDMs by extreme directions of EDM cone.} The positive linear combination of base EDMs generated by ordinal encoders can only span a subspace of the EDM cone. 
To accurately represent \textit{any} Euclidean Distance Matrix (EDM), we propose a method that exploits the structural properties of the EDM cone. This approach is based on the mathematical nature of EDMs as elements within a convex cone. In what follows, we first formalize the concept of the EDM cone, explore its structure, and demonstrate how the extreme directions of this cone can serve as base EDMs for constructing any EDM.

\begin{definition}[\citep{boyd2004convex}]
In the space of $n \times n$ symmetric matrices, the set of all $n \times n$ Euclidean Distance Matrices (EDMs) forms a unique, immutable, pointed, closed convex cone called the EDM cone, denoted as $\mathcal{E}_n$. Specifically,
\begin{align*}
    \mathcal{E}_n = \big\{ D \in \mathbb{R}^{n \times n} \mid &\ D = [d_{ij}], \ d_{ij} = \|x_i - x_j\|_2, \\
    \ d_{ii} &= 0 \text{ for all } i, j \big\}.
\end{align*}
The dimension of the cone $\mathcal{E}_n$ is ${n(n-1)}/{2}$.
\label{cone}
\end{definition}

\begin{remark}
The dimensionality, ${n(n-1)}/{2}$, reflects the degrees of freedom in the pairwise distances for $n$ points in Euclidean space.
\label{dim_remark}
\end{remark}
The definition \ref{cone} provides a formal foundation for analyzing EDMs as elements of a specific convex structure.

\begin{lemma}[Carathéodory's theorem] Let X be a nonempty subset of $\mathbb{R}^n$.
Every nonzero vector of cone(X) can be represented as a positive combination of $n$ linearly independent vectors from X.
\label{lemma:cara}
\end{lemma}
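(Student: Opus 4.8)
The plan is to prove this by the standard \emph{minimal representation} argument. Since $v \in \mathrm{cone}(X)$ and $v \neq 0$, by definition of the conical hull there exists at least one representation $v = \sum_{i=1}^{k} \lambda_i x_i$ with $\lambda_i > 0$ and $x_i \in X$. Among all such finite positive representations I would select one in which the number of terms $k$ is as small as possible. The goal is then to show that the vectors appearing in this minimal representation are automatically linearly independent, after which $k \le n$ follows immediately because $\mathbb{R}^n$ contains at most $n$ linearly independent vectors.

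The core of the argument is to derive a contradiction from the assumption that $x_1, \dots, x_k$ are linearly dependent. In that case there exist scalars $\mu_1, \dots, \mu_k$, not all zero, with $\sum_{i=1}^{k} \mu_i x_i = 0$; after possibly replacing every $\mu_i$ by $-\mu_i$, I may assume at least one $\mu_i$ is strictly positive. Then for any $t$ I can write $v = \sum_{i=1}^{k} (\lambda_i - t\mu_i)\, x_i$. Setting $t^\ast = \min\{ \lambda_i / \mu_i : \mu_i > 0 \}$, which is well defined and strictly positive, every coefficient $\lambda_i - t^\ast \mu_i$ remains nonnegative (for indices with $\mu_i \le 0$ the coefficient only increases, while for $\mu_i > 0$ the choice of $t^\ast$ keeps it nonnegative), and at least one coefficient --- the index attaining the minimum --- becomes exactly zero. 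Discarding the vanishing terms yields a positive representation of $v$ using fewer than $k$ vectors, contradicting minimality.

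Having ruled out linear dependence, I conclude that the minimal representation uses linearly independent vectors $x_1, \dots, x_k$ with $k \le n$, which is the claim. I expect the main obstacle to be essentially bookkeeping: verifying that the shifted coefficients stay nonnegative precisely at $t = t^\ast$ and that the construction genuinely removes a term rather than introducing a new one. The one subtlety worth flagging is that the statement is most naturally read as ``at most $n$'' linearly independent vectors; if an exact count of $n$ is desired one would adopt the convention that some coefficients may be zero, but the bound $k \le n$ is exactly what the subsequent application to the $\tfrac{n(n-1)}{2}$-dimensional EDM cone requires.
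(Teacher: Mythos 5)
Your proof is correct and complete: the minimal-representation argument --- taking a positive representation of $v$ with the fewest possible terms, and showing that any linear dependence $\sum_i \mu_i x_i = 0$ would let you shift coefficients to $\lambda_i - t^\ast \mu_i$ with $t^\ast = \min\{\lambda_i/\mu_i : \mu_i > 0\}$ and thereby drop a term --- is the standard proof of the conic form of Carath\'eodory's theorem, and your bookkeeping (coefficients with $\mu_i \le 0$ only grow, the minimizing index vanishes, $t^\ast > 0$ is well defined) is all in order. Note, however, that the paper itself gives no proof of this lemma: it is stated as a known result with citations to \citep{deza1997geometry,hiriart2004fundamentals}, and the appendix only invokes it (together with the definition of extreme directions) to justify Proposition~\ref{prop:extreme}. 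So there is no paper argument to compare against; your write-up supplies a self-contained justification for what the paper delegates to the literature. Your closing caveat is also well taken: as stated, the lemma asserts a representation by exactly $n$ linearly independent vectors, which is only accurate under the convention that some coefficients may be zero; the honest conclusion of the argument is a representation by at most $n$ linearly independent vectors, and that weaker bound (with $n$ replaced by the dimension $n(n-1)/2$ of the EDM cone) is precisely what the paper's application in Proposition~\ref{prop:extreme} requires.
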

 
Carathéodory’s theorem~\citep{deza1997geometry,hiriart2004fundamentals} guarantees that any vector within a convex cone, such as an EDM, can be expressed as a positive linear combination of a finite number of basis vectors. This result is critical for establishing that any EDM can be constructed using a limited set of linearly independent matrices.
\begin{definition}[Extreme Directions of the EDM Cone]
An extreme direction of the EDM cone corresponds to the case where the affine dimension $r=1$. For any cardinality $N \geq 2$, each nonzero vector $z$ in $\mathcal{N}(\mathbf{1}^{\mathrm{T}})$, where $\mathcal{N}$ denotes the null space, can be used to define an extreme direction $\Gamma \in \mathbb{E D M}^N$ as follows:
\begin{align}
\Gamma & \triangleq (z \circ z) \mathbf{1}^{\mathrm{T}} + \mathbf{1} (z \circ z)^{\mathrm{T}} - 2 z z^{\mathrm{T}} \in \mathbb{E D M}^N 
\label{form:extreme direction}
\end{align}
where $\Gamma$ represents a ray in an isomorphic subspace $\mathbb{R}^{N(N-1)/2}$, corresponding to a one-dimensional face of the EDM cone.
\label{extreme}
\end{definition}
Extreme directions, constructed by Eq.~\ref{form:extreme direction}, constitute the fundamental elements that delineate the minimal boundaries of the EDM cone. These directions correspond to rays extending along specific axes of the cone and are essential for generating any element within the cone through positive linear combination. Consequently, by selecting extreme directions as the base EMDs, it is possible to compute any EDM. To substantiate this claim, we combine Carathéodory’s theorem with the characterization of extreme directions, leading to the proposition below.

\begin{proposition}
    Any vector EDM matrix, in the cone $\mathcal{E}_n$ can be constructed using positive linear combination of ${n(n-1)}/{2}$ linearly independent extreme directions.
\label{prop:extreme}
\end{proposition}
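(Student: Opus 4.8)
The plan is to combine two ingredients: first, that the extreme directions of Definition~\ref{extreme} generate the entire EDM cone $\mathcal{E}_n$ as a conic hull; and second, that Carath\'eodory's theorem (Lemma~\ref{lemma:cara}) then caps the number of generators needed to represent any single EDM. The bound $n(n-1)/2$ will come directly from the ambient dimension established in Definition~\ref{cone} and Remark~\ref{dim_remark}: an EDM is a symmetric matrix with zero diagonal and is therefore determined by its $n(n-1)/2$ strictly upper-triangular entries, placing $\mathcal{E}_n$ inside an isomorphic copy of $\mathbb{R}^{n(n-1)/2}$ in which, being full-dimensional, it meets the hypotheses of the conic Carath\'eodory bound.

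First I would establish the generation claim through the Gram-matrix correspondence. Every $D \in \mathcal{E}_n$ arises from a point configuration, and after geometric centering (translating so the points sum to zero) the associated Gram matrix $G$ is symmetric positive semidefinite and satisfies $G\mathbf{1}=0$, i.e.\ $\operatorname{range}(G)\subseteq \mathcal{N}(\mathbf{1}^{\mathrm{T}})$. The linear map $\Phi(G)=\mathbf{1}\,\operatorname{diag}(G)^{\mathrm{T}}+\operatorname{diag}(G)\,\mathbf{1}^{\mathrm{T}}-2G$ sends $G$ to $D$, and comparing with Eq.~\ref{form:extreme direction} shows that $\Phi(zz^{\mathrm{T}})=\Gamma$ for the extreme direction generated by $z$. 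Taking the spectral decomposition $G=\sum_k \lambda_k z_k z_k^{\mathrm{T}}$ with $\lambda_k\ge 0$ and eigenvectors $z_k\in\operatorname{range}(G)\subseteq\mathcal{N}(\mathbf{1}^{\mathrm{T}})$, and using linearity of $\Phi$, yields $D=\sum_k \lambda_k \Gamma_{z_k}$, a nonnegative combination of extreme directions. Hence $\mathcal{E}_n=\operatorname{cone}(\{\Gamma_z : z\in\mathcal{N}(\mathbf{1}^{\mathrm{T}}),\ z\neq 0\})$.

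With generation in hand, I would invoke Lemma~\ref{lemma:cara} with $X$ the set of extreme directions, viewed inside $\mathbb{R}^{n(n-1)/2}$. Carath\'eodory's theorem then guarantees that any nonzero $D\in\operatorname{cone}(X)=\mathcal{E}_n$ is a positive combination of at most $n(n-1)/2$ linearly independent extreme directions, which is exactly the claimed representation. If a particular low-rank $D$ requires strictly fewer, the collection can be extended to $n(n-1)/2$ linearly independent extreme directions carrying zero weight, which is possible because the extreme directions span $\mathbb{R}^{n(n-1)/2}$ (the cone being full-dimensional); this lets the count be stated uniformly as $n(n-1)/2$.

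The main obstacle is the generation step, specifically verifying that the extreme directions of Eq.~\ref{form:extreme direction} really exhaust the extreme rays of $\mathcal{E}_n$ and that the centered Gram matrix's eigenvectors lie in $\mathcal{N}(\mathbf{1}^{\mathrm{T}})$; once $\Phi$ is identified as the linear bijection between centered positive semidefinite matrices and EDMs, the remainder is a routine application of linearity together with the conic Carath\'eodory bound. A secondary point requiring care is confirming that $\mathcal{E}_n$ is full-dimensional in $\mathbb{R}^{n(n-1)/2}$, so that the stated number of generators is both attainable and not improvable below $n(n-1)/2$.
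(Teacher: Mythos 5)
Your proposal is correct and follows exactly the route the paper itself takes: combining the characterization of extreme directions in Definition~\ref{extreme} (Eq.~\ref{form:extreme direction}) with the conic Carath\'eodory bound of Lemma~\ref{lemma:cara} inside the $n(n-1)/2$-dimensional space of symmetric hollow matrices. In fact, the paper's proof is only the one-line assertion that the result ``can be directly derived'' from these two ingredients, whereas you actually supply the nontrivial generation step --- the spectral decomposition of the centered Gram matrix $G=\sum_k \lambda_k z_k z_k^{\mathrm{T}}$ with eigenvectors in $\mathcal{N}(\mathbf{1}^{\mathrm{T}})$, pushed through the linear map $\Phi$ to give $D=\sum_k \lambda_k \Gamma_{z_k}$ --- so your write-up is a complete version of the same argument.
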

\begin{proof}
    See Appendix~\ref{subsec:any_edm}.
\end{proof}
This proposition highlights that any possible EDM can be generated through positive linear combination of a set of ${n(n-1)}/{2}$ linearly independent extreme direction matrices. The ability to construct any EDM in this manner ensures that, with sufficient data, the coefficients of these combinations can be estimated accurately, thereby allowing for an effective approximation of the true EDM. Consequently, this method enables a comprehensive representation of the relationships between points in the underlying space, fulfilling the objectives of our approach. 

Beyond those two methods mentioned above, other techniques can also be used to create base EDMs, each offering a different perspective on the relationships between categories. This flexibility allows \name\ to adapt to diverse types of data and capture the complex correlations within it.

\textbf{Kernel construction by EDM.} Consider two inputs $\mathbf{z_p}=[\mathbf{x_p}, \mathbf{h_p}]$ and $\mathbf{z_q}=[\mathbf{x_q}, \mathbf{h_q}]$. The categorical kernel $R_k$ for $\mathbf{h}^{(k)}$ is defined based on $D_k$:
\begin{align}
R_k\left(\mathbf{h}_p^{(k)}, \mathbf{h}_q^{(k)} \mid \boldsymbol{w}_k\right) &= \exp \left(-D_{k,p q}\right) \\
&=\exp \left(-\sum_{i=1}^{m_k} w_{k}^{(i)} D_{k,}^{(i)}{ }_{p q}\right),    
\end{align}
where $D_{k,}^{(i)}{}_{p q}$ denotes the distance between categories $\mathbf{h}_p^{(k)}$ and $\mathbf{h}_q^{(k)}$ in the $i$-th base EDM, and ${w}_k^{(i)}$ are the corresponding weights.
For $d$ continuous variables, the kernel is given by:
$$
R\left(\mathbf{x}_p, \mathbf{x}_q \mid \boldsymbol{\theta}\right)=\exp \left(-\sum_{j=1}^d \theta_j\left\|x^{(j)}_{p}-x^{(j)}_{q}\right\|^2\right).
$$
The overall kernel for mixed inputs combines the continuous and categorical kernels multiplicatively:
\begin{align}
K&\left(\mathbf{z}_p, \mathbf{z}_q \mid \sigma, \boldsymbol{\theta},\left\{\boldsymbol{w}_k\right\}\right) \\
&=\sigma_0^2 R\left(\mathbf{x}_p, \mathbf{x}_q \mid \boldsymbol{\theta}\right) \prod_{k=1}^c R_k\left(\mathbf{h}_p^{(k)}, \mathbf{h}_q^{(k)} \mid \boldsymbol{w}_k\right).
\label{eq:kernel_com}
\end{align}

\subsection{Kernel Hyperparameters Estimation}
\label{section: model estimation}
 
From the Eq.~\ref{eq:kernel_com}, the kernel hyperparameters we need to estimate are $\sigma^2,\{\theta_i\}, \tau,\{w_k^{(i)}\}$. Here, $w_k^{(i)}$ are the weights of the linear combination of different basis EDM, \(\theta_i\) is the inverse squared length scale, $\tau$ is a global shrinkage parameter, and \(\sigma_0\) is the process variance. We adopt a full Bayesian inference~\citep{frazier2018tutorial} to estimate these parameters. It allows for a more robust estimation by integrating over the uncertainty in the kernel hyperparameters.

\textbf{Hierarchical GP.} To mitigate model overfitting, we aim to achieve some sparsity of the weights $w_k^{(i)}$. To accomplish this, we employ a hierarchical GP model, a similar approach was applied in \citep{eriksson2021high} to address the high-dimensional challenge in BO.
The hierarchical structure achieves the sparsity in weights through properly chosen priors. The joint distribution of the model parameters $\sigma^2,\{\theta_i\}, \tau,\{w_k^{(i)}\}$ is expressed as:
\begin{equation*}
[\sigma^2,\{\theta_i\}, \tau,\{w_k^{(i)} \}]=[ \{w_k^{(i)} \} \mid \tau ] \times[\tau] \times [ \{\theta_i \} ] \times [\sigma^2 ]
\end{equation*}
Here the weight is governed by a global shrinkage parameter $\tau$, which also has a prior distribution. Specifically, the priors for hyper-parameters are as follows:
\begin{align*}
\text{[kernel variance]} \quad & \sigma^2 \sim \mathcal{LN}(0, 10^2) \\
\text{[length scales]} \quad & \theta_i \sim Uniform(0, 1)  \\
\text{[global shrinkage]} \quad & \tau \sim \mathcal{HC}(\alpha) \\
\text{[coefficients]} \quad & w_{k}^{(i)} \sim \mathcal{HC}(\tau) \quad \\ &\text{for } i = 1, \ldots,m_k; \\& k = 1, \ldots, c 
\end{align*}
where $\mathcal{LN}$ denotes the log-Normal distribution and $\mathcal{HC}(\alpha)$ denotes the half-Cauchy distribution, i.e. $p(\tau|\alpha) \propto (\alpha^2 + \tau^2)^{-1}(\tau > 0)$, and $p(w_{k}^{(i)}|\tau) \propto (\tau^2 + {w_{k}^{(i)}}^2)^{-1}(w_{k}^{(i)} > 0)$. $\alpha$ is a hyper-parameter that controls the level of shrinkage (default is $\alpha = 0.1$)~\citep{eriksson2021high}. The prior on the kernel variance $\sigma_k^2$ is weak to allow flexibility in the model.

The half-Cauchy priors for both the global shrinkage parameter $\tau$ and the coefficients $w_{k}^{(i)}$ encourages most coefficients to be near zero, effectively reducing the model complexity by focusing on the most relevant features. This approach aligns with automatic relevance determination \citep{mackay1994automatic}, which aims to identify and focus on the most relevant weights in the data. Moreover, the half-Cauchy priors also have heavy tails, meaning that if the observed data provide strong enough evidence with larger data set, the model “turning on” more weights. Consequently, the hierarchical GP model can adapt the level of sparsity in response to the input size, maintaining both interpretability and efficiency. 
 We implement the Bayesian inference using the No-U-Turn Sampler (NUTS) \citep{hoffman2014no}, an adaptive variant of Hamiltonian Monte Carlo for \name. The fully Bayesian framework enhances our model by enabling dynamic adjustment of the global shrinkage parameter through its distribution, rather than fixing it. Therefore, our approach allows the model to learn weighted base components that are simple yet informative when data is limited, and to approximate more complex structures as the dataset grows. This ensures that the optimization process remains effective and accurate across different data scales.

\subsection{Theoretical Analysis}
In this subsection, we aim to validate our proposed \name from a theoretical perspective. Specifically, we want to investigate whether the predictive mean will converge to the true objective function $f$, and more importantly, how fast it converges. The consistency and convergence rate are strong supporting evidence that our \name is valid. 
\begin{theorem}
\label{theorem: convergence}

For a Matérn kernel $K_\nu$ with smoothness $\nu>0$, let $H_\nu(\mathcal{Z})$ denote the RKHS of $K_\nu$ on $\mathcal{Z}.$
Assume that $f \in \mathcal{H}_\nu(\mathcal{Z})$.
Suppose that the design in $\mathcal{Z}$ has fill distance $h$. Then there exist constants $C$ (independent of $h$) such that for all $\mathbf{z} \in \mathcal{Z}$
\begin{equation}
  \left|f(\mathbf{z})-\mu_n(\mathbf{z} ; \nu)\right|\leq C\|f\|_{\mathcal{H}_\nu(\mathcal{Z})} h^{\nu \wedge 1}\left(\log \frac{1}{h}\right)^\beta .
\end{equation}
where $\beta \geq 0$ depends on $\nu$ (with no logarithmic correction when $\nu>1$ ).
\label{main theorem}
\end{theorem}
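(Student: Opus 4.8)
The plan is to identify the posterior mean $\mu_n(\cdot\,;\nu)$ with the minimum-norm interpolant of $f$ in the RKHS $\mathcal{H}_\nu(\mathcal{Z})$, and then to control the interpolation error through a power-function / sampling-inequality argument. In the noise-free regime the GP posterior mean coincides with $s_{f,X}=\arg\min\{\|s\|_{\mathcal{H}_\nu(\mathcal{Z})} : s(\mathbf{z}_i)=f(\mathbf{z}_i),\ i=1,\dots,n\}$, the representer of the interpolation problem on the design $X=\{\mathbf{z}_1,\dots,\mathbf{z}_n\}$; the process variance $\sigma_0^2$ and the constant mean $\mu$ only rescale or shift and so do not affect this identity. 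The error function $g:=f-\mu_n$ therefore vanishes on all of $X$, and the minimum-norm property yields $\|g\|_{\mathcal{H}_\nu(\mathcal{Z})}\le 2\|f\|_{\mathcal{H}_\nu(\mathcal{Z})}$. This reduces the theorem to bounding $\|g\|_{L^\infty(\mathcal{Z})}$ for a function in $\mathcal{H}_\nu(\mathcal{Z})$ that is zero on a design of fill distance $h$.

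The second step is the decisive estimate: a sampling inequality. Since the native space of the Mat\'ern kernel $K_\nu$ is norm-equivalent to the Sobolev space $H^{\nu+d/2}$ on the continuous factor, I would invoke a Narcowich--Ward--Wendland / Arcang\'eli-type sampling inequality, which states that a function vanishing on a point set of fill distance $h$ satisfies $\|g\|_{L^\infty}\le C\, h^{\nu\wedge 1}\,(\log(1/h))^\beta\,\|g\|_{\mathcal{H}_\nu}$. Combining this with the norm bound from the first step gives the claimed rate. The truncation at $\nu\wedge 1$ and the appearance -- or, for $\nu>1$, the disappearance -- of the logarithmic factor are precisely the saturation and boundary phenomena produced by this inequality, so the burden is to verify its hypotheses (in particular that $\mathcal{Z}$ satisfies an interior-cone condition on each continuous slice and that $h$ is small enough) rather than to reprove it from scratch.

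Because the inputs are mixed, the third step is to reduce the problem on $\mathcal{Z}=\mathcal{X}\times\mathcal{H}$ to the continuous factor. The kernel in Eq.~\ref{eq:kernel_com} is a product of the continuous Mat\'ern kernel with the categorical kernels $R_k$, so by the tensor-product structure of reproducing kernel Hilbert spaces, $\mathcal{H}_\nu(\mathcal{Z})$ is isometric to the tensor product of the continuous native space with the finite-dimensional categorical spaces. Each categorical factor has finite cardinality $c_k$, and its Gram matrix is strictly positive definite (Proposition~\ref{proposition:linearindependent} guarantees that the combined EDM, hence the kernel, is valid and positive definite), so once the design covers every combination of categories the categorical directions are resolved exactly and contribute no error. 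The fill distance $h$ then measures denseness within each continuous slice $\mathcal{X}\times\{\mathbf{h}\}$, and the slice-wise sampling inequality yields the global bound with a constant $C$ independent of $h$.

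The main obstacle I anticipate is the second step: obtaining the sharp exponent and the precise logarithmic power $\beta$ for a function that is only assumed to lie in the native space $\mathcal{H}_\nu$ (with no extra smoothness) and is required to vanish only on a finite, possibly irregular design. Making $C$ uniform in $h$ forces care with the domain geometry (the cone condition and the boundary of $\mathcal{X}$) and with the fact that the finitely many categorical slices share the same continuous design; controlling these uniformly across all slices while tracking the saturation at $\nu=1$ is where the real work lies, whereas the interpolation identity and the tensor-product reduction are comparatively routine.
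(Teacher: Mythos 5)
Your proposal is correct in substance but takes a genuinely different route from the paper. The paper's proof is far more elementary: it starts from the same power-function bound $|f(\mathbf{z})-\mu_n(\mathbf{z};\nu)|\le s_n(\mathbf{z};\nu)\,\|f\|_{\mathcal{H}_\nu(\mathcal{Z})}$, but then bounds $s_n$ pointwise by the variance of the trivial predictor that copies the value at the nearest design point, $s_n^2(\mathbf{z};\nu)\le 2\bigl(1-K_\nu(\mathbf{z},\mathbf{z}_i)\bigr)$ with $\|\mathbf{z}-\mathbf{z}_i\|\le h$, and finishes by Taylor-expanding the Mat\'ern kernel at the origin (via Stein), where $1-K_\nu(r)=O\bigl(r^{2(\nu\wedge 1)}(-\log r)^{2\alpha}\bigr)$. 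That one-point argument is exactly what produces the truncation at $\nu\wedge 1$ and the logarithmic factor: a nearest-neighbor bound can only see the kernel's regularity at the origin, which caps the attainable order. Your route---identify $\mu_n$ with the minimum-norm interpolant, observe that $g=f-\mu_n$ vanishes on the design with $\|g\|_{\mathcal{H}_\nu}\le\|f\|_{\mathcal{H}_\nu}$ (orthogonality even gives constant $1$, not your $2$), and invoke a Narcowich--Ward--Wendland/Arcang\'eli zeros (sampling) inequality in the Sobolev-equivalent native space---requires heavier machinery (cone condition, norm equivalence with $H^{\nu+d/2}$, slice-wise reduction), but if carried out it proves a strictly stronger statement: the zeros inequality yields $\|g\|_{L^\infty}\le C\,h^{\nu}\,\|g\|_{\mathcal{H}_\nu}$ with no saturation, which implies the theorem since $h^\nu\le h^{\nu\wedge 1}$ for $h\le 1$.

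Two points in your plan need repair, though neither is fatal. First, you attribute the $\nu\wedge 1$ truncation and the log factor to the sampling inequality itself; that is backwards---the zeros inequality carries no such truncation, and in the paper the saturation is purely an artifact of the single-nearest-neighbor bound on the power function. Second, the kernel this theorem actually concerns (the appendix's WEGP Mat\'ern kernel) is a Mat\'ern function of the \emph{joint} mixed distance $\sqrt{d_{\mathbf{x}}(\mathbf{x}_p,\mathbf{x}_q)^2+\sum_k D_{k,pq}}$, not a product of a continuous Mat\'ern with categorical kernels, so your tensor-product isometry claim does not apply as stated. The fix is standard: restriction of an RKHS function to a slice $\mathcal{X}\times\{\mathbf{h}\}$ is norm-decreasing (Aronszajn), the joint kernel restricted to a slice is exactly the continuous Mat\'ern kernel, and once $h$ is below the minimal inter-category distance, fill distance $h$ in $\mathcal{Z}$ forces fill distance at most $h$ within every slice. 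With these corrections your argument goes through and in fact sharpens the paper's stated rate.
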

\begin{proof}
    See Appendix~\ref{sec:converge}.
\end{proof}
This theorem proves that the posterior mean function has a convergence rate of $\mathcal{O}( h^{\nu \wedge 1}\left(\log \frac{1}{h}\right)^\beta)$, and the proof of the convergence rate inherently demonstrates the convergence. It guarantees the reliability and accuracy of the predictions made by the WEGP model.

\begin{figure*}[h]
    \centering
    \begin{subfigure}[b]{0.245\textwidth}
        \centering
    \includegraphics[width=\textwidth]{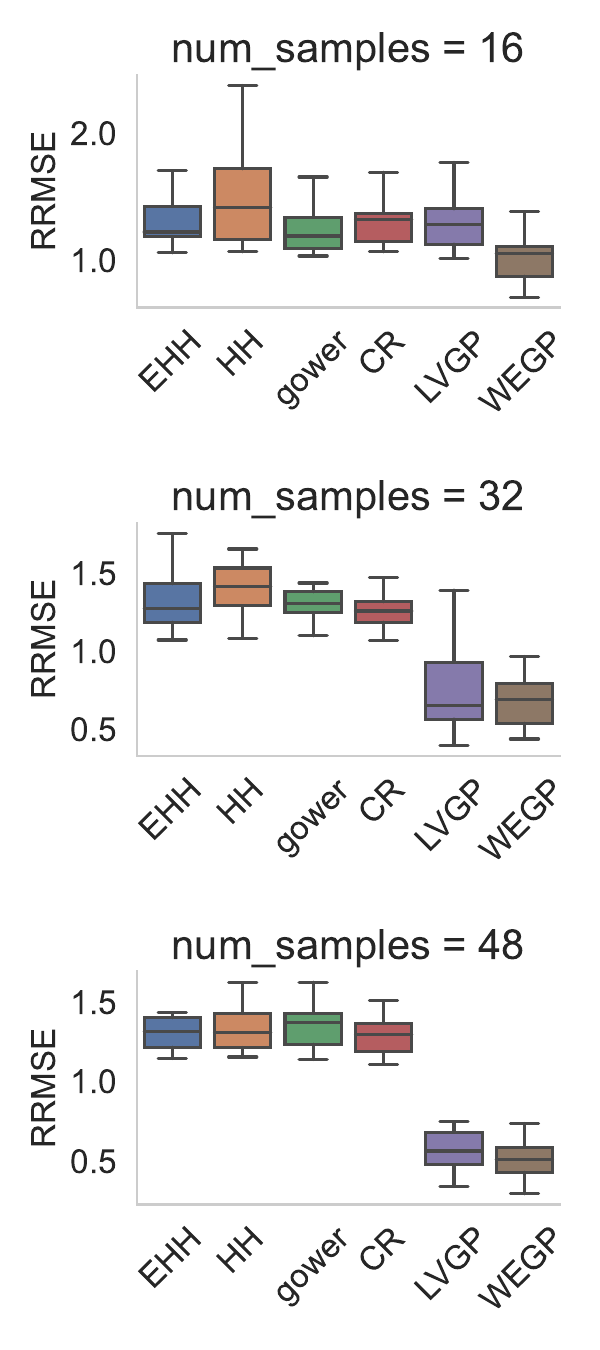}
        \caption{Beam bending}
    \end{subfigure}
    \begin{subfigure}[b]{0.245\textwidth}
        \centering
        \includegraphics[width=\textwidth]{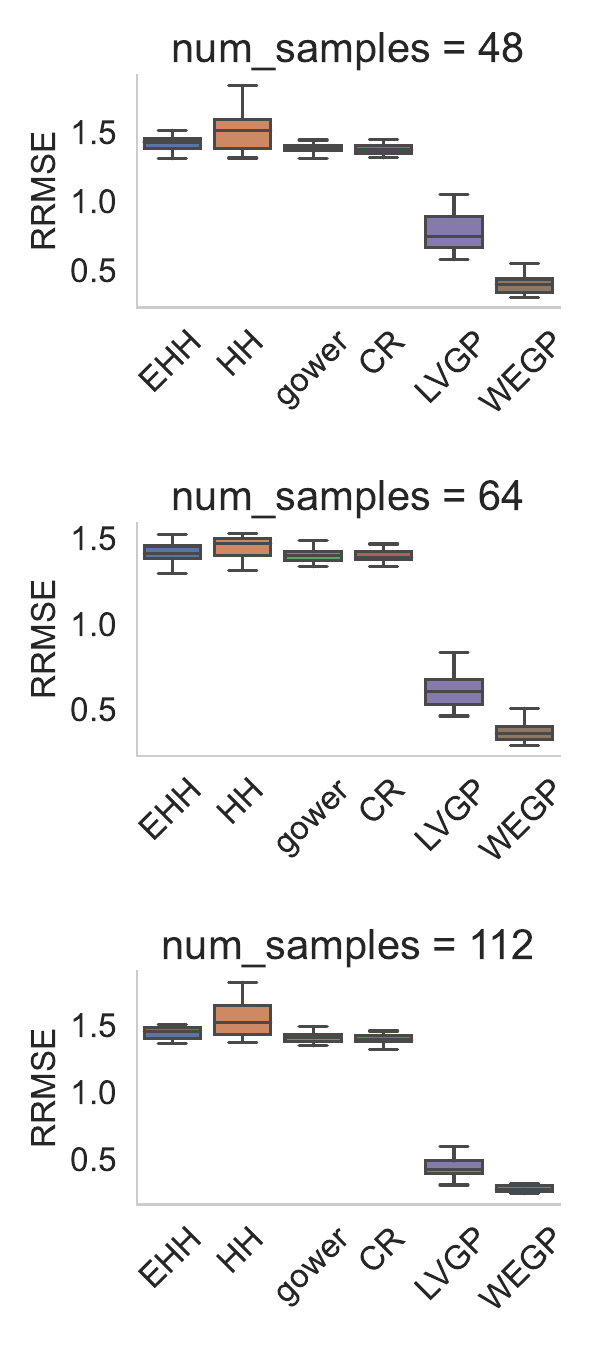}
        \caption{Piston}
    \end{subfigure}
    \begin{subfigure}[b]{0.245\textwidth}
        \centering
        \includegraphics[width=\textwidth]{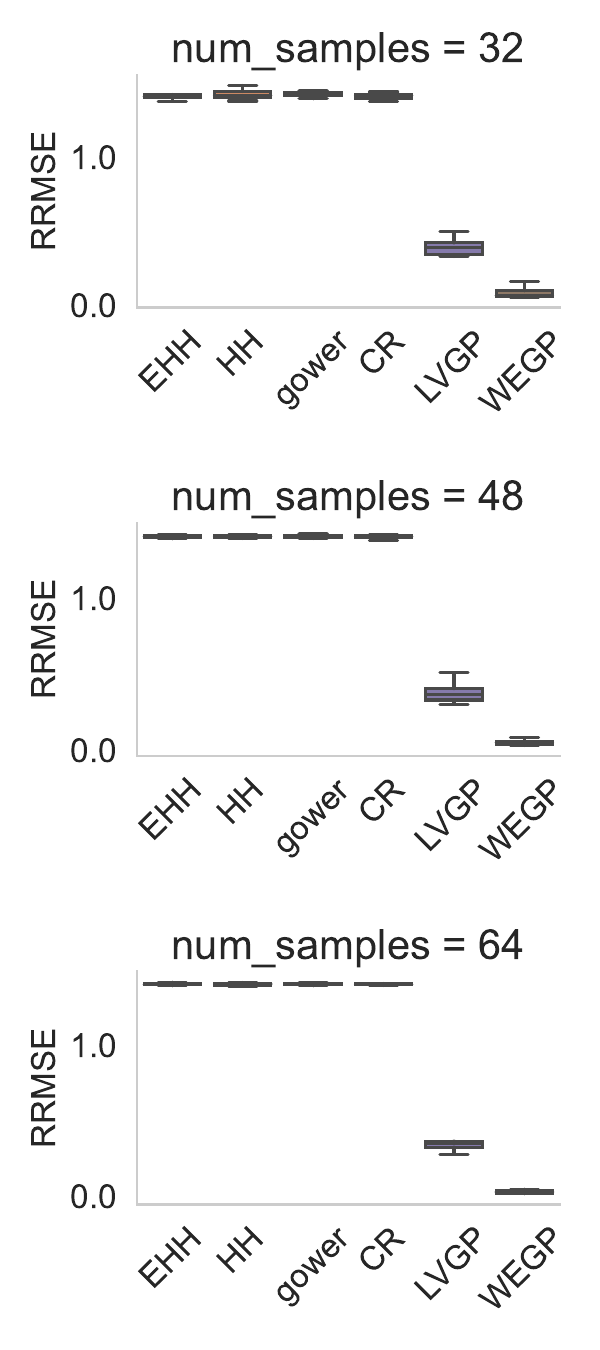}
        \caption{Borehole}
    \end{subfigure}
    \begin{subfigure}[b]{0.245\textwidth}
        \centering
        \includegraphics[width=\textwidth]{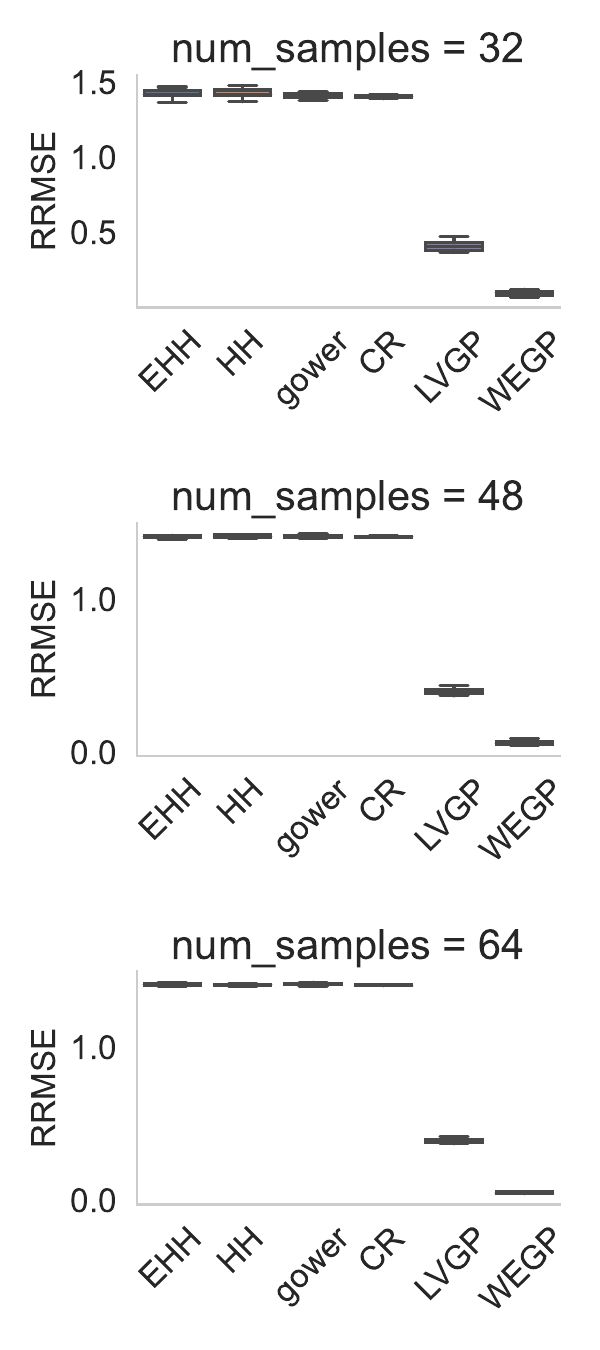}
        \caption{Otl}
    \end{subfigure}
    \caption{WEGP model accuracy comparison under three different sizes of training data}
    \label{fig:image_matrix}
\end{figure*}
\section{WEBO}

We apply \name model in Bayesian Optimization (BO) to extend its usage in black-box optimization problems with mixed input, name it as WEBO.
We use expected improvement (EI) as our acquisition function due to computational efficiency, and well-accepted empirical performance. The EI function depends on the kernel hyperparameters $\phi=\{\sigma, \mathbf{\theta},w_k\}$ through the GP model. Since we use NUTS to sample hyperparameters from the posterior distribution, the expected improvement is defined by averaging EI over $L$ posterior samples:
\begin{equation}
\operatorname{EI}\left(\mathbf{z} \mid {y}_{\min },\left\{\phi_{\ell}\right\}\right) \equiv \frac{1}{\mathrm{~L}} \sum_{\ell=1}^{\mathrm{L}} \operatorname{EI}\left(\mathbf{z} \mid {y}_{\min }, \phi_{\ell}\right)
\label{eq:EI}
\end{equation}
Since Eq.~\ref{eq:EI} is non-differentiable in a mixed input space, we optimize it by selecting the point with the highest acquisition function value from 500 randomly generated design data as the query point. Algorithm \ref{alg:webo} provides a complete outline of the WEBO algorithm.
\begin{algorithm}[h]
\caption{WEBO Algorithm}
\label{alg:webo} 
\begin{algorithmic}[1]
\State \textbf{Input:} A black-box function $f$, observation data $D_0$, maximum number of iterations $T$
\State \textbf{Output:} minimum objective function value $\left( \mathbf{z}_{min},y_{min} \right)$, where $\mathbf{z}_{min}=(\mathbf{x}_{min},\mathbf{h}_{min})$
\For{$t = 1, \dots, T$}
    \State Let $y_{\min }^t=\min _{s<t} y_s$
    \State Fit \name to $\mathcal{D}_{t-1}$ using NUTS to obtain hyper-parameter samples $\left\{\phi^t_\ell\right\}$.    
    \State Optimize EI to obtain $\mathbf{z}_t$:
    $$\mathbf{z}_t=\operatorname{argmax}_{\mathbf{x,h}} \mathrm{EI}(\mathbf{x,h} \mid \mathrm{y}_{\min }^{\mathrm{t}},\phi_\ell^{\mathrm{t}})$$
    \State Query at $\mathbf{z}_t = (\mathbf{x}_t, \mathbf{h}_t)$ to obtain $f_t(\mathbf{z}_t)$, $D_t \leftarrow D_{t-1} \cup \left(\mathbf{z}_t,f_t\left(\mathbf{z}_t\right)\right)$
\EndFor
\State \textbf{return} $(\mathbf{z}_{\min }, y_{\min })$ where $(\mathbf{o}_{\min }, y_{\min }) \equiv(\mathbf{x}_{t_{\min }}, y_{t_{\min }})$ and $t_{\min }=\operatorname{argmin}_t y_t$.
\end{algorithmic}
\end{algorithm}

\section{EXPERIMENTS}
\subsection{WEGP}
In this section, we evaluate the predictive accuracy of the WEGP model. We construct the basis EDM by the ordinal encoder, and set $m_k=c_k(c_k-1)/2$ base EDMs for $k^{th}$ categorical variable with $c_k$ categories to capture all possible pairwise relationships among the $c_k$ categorical choices. For comparative analysis, we use open-source implementations for baseline models LVGP~\citep{zhang2020latent}, Gower distance~\citep{halstrup2016black}, Continuous Relaxation (CR) \citep{golovin2017google, garrido2020dealing}, Hypersphere Decomposition (HH) \citep{zhou2011simple}, and Enhanced Hypersphere Decomposition (EHH) \citep{saves2023mixed}. 
The prediction quality of the methods is quantified as the relative root-mean-squared error (RRMSE) of their predictions over $N$ test points: $
\operatorname{RRMSE}=\sqrt{\frac{\sum_{i=1}^N(y_i-\widehat{y}_i)^2}{\sum_{i=1}^N(y_i-\bar{y})^2}},
$
where $y_i$ and $\widehat{y}_i$ denote the true and predicted values respectively for the $i^{\text {th }}$ test sample, and $\bar{y}$ is the average across the $N$ true test observations. For each problem, we utilize three different sizes of training data and perform 15 independent macro-replications to ensure the robustness of the results. The codebase is built on top of the GPyTorch~\citep{gardner2018gpytorch}.  

\textbf{Test functions.} We evaluate the performance of the \name and LVGP models using four engineering benchmarks that are commonly employed for surrogate modeling with mixed inputs. Detailed descriptions and formulations of each model are provided in the Appendix~\ref{subsec:model test}.

\begin{figure*}[t]
    \centering
    \includegraphics[width=\linewidth]{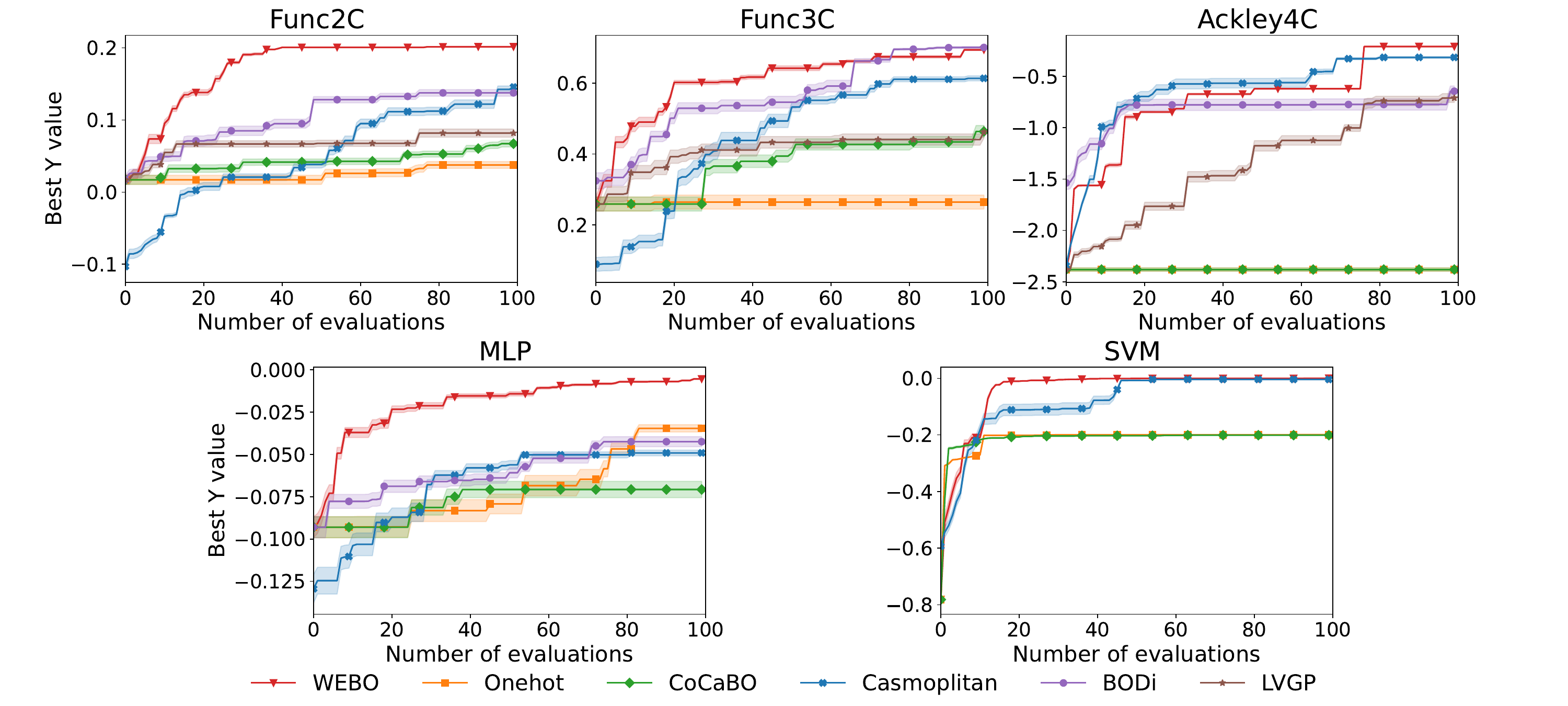}
    \caption{Results on mixed input optimization problems. Lines and shaded area denote mean and standard error.}
    \label{fig:webo}
\end{figure*}
\textbf{Model performance.} \name consistently outperforms LVGP on four test functions, especially when there are fewer training observations. This superior performance can be attributed to \name's ability to utilize the distance information provided by base EDMs, whereas LVGP has to estimate the distance from the ground up. As the number of training observations increases, the prediction errors for both \name and LVGP decrease. However, it is important to note that the presence of noise tends to increase the overall prediction errors for both methods. Despite this, \name maintains its edge over LVGP, demonstrating robust performance.

\subsection{WEBO}

We evaluate WEBO on various optimization problems for mixed inputs with continuous and categorical inputs. According to the analysis in \citep{dreczkowski2024framework}, we compare against several competitive baselines, including BODi~\citep{deshwal2303bayesian},  CASMOPOLITAN~\citep{wan2021think} , CoCaBO~\citep{ru2020bayesian}, and BO based on onehot encoding~\citep{golovin2017google, garrido2020dealing}.
LVGP~ is also considered in synthetic problems by adding the acquisition function the same as WEGP.

\textbf{Test functions.} Most of the works’ experimental sections (such as \citep{ru2020bayesian}, \citep{deshwal2023bayesian}, \citep{wan2021think} and \citep{papenmeier2023bounce}) focus on problems involving binary variables. Since only one pairwise relationship in binary variables, the complexity (i.e., the number of parameters in the kernel) of the GP model does not significantly impact the output, less complex GP model, which they use, it acceptable.
In contrast, real-world problems often involve categorical variables with multiple categories. In such cases, simple GP models are unable to accurately capture all pairwise relationships between categories, which can adversely affect model accuracy. Our model, however, is designed to effectively capture relationships between categories even in multi-class problems, resulting in improved accuracy. We tested all these methods on synthetic problems and real-world problems with multiple categories. Detailed information is provided in Appendix~\ref{subsec:optimization_test}, here is a brief sketch:
\begin{itemize}
    \item \textit{Func2C}, with $c=2$ and $d=2$, and \textit{Func3C} with $c=3$, $d=3$, respectively.
    \item \textit{Ackley4C}, with $c=4$ and $d=3$. Each categorical variable contain 3 categories.
    \item \textit{MLP}, with $c=3$ and $d=3$, we tunes 3 categorical and 3 continuous hyperparamters for MLP. Each categorical variable contains 3 choices. MSE is evaluated to measure the performance.
    \item \textit{SVM}, with $c=1$ and $d=2$, the categorical variable has 4 choices. MSE is evaluated to measure the performance.
\end{itemize}

\textbf{Model performance.}
We compared our proposed method with the five aforementioned approaches, excluding the LVGP method from the hyperparameter tuning comparison due to its prohibitively long computation time. And BODi is excluded for SVM hyperparameter tuning as it is designed for high-dimensional binary problems and does not support tasks with a categorical variable having 4 categories. By conducting eight independent replications of each experiment, we ensured statistical reliability. The results indicate that our method outperforms the alternatives by converging more rapidly to the optimal solution on test functions, achieving a lower mean squared error (MSE) in the multilayer perceptron (MLP) model on real-world problems, and exhibiting smaller confidence intervals. These smaller shaded areas demonstrate that our experimental results are more consistent and robust. Overall, these findings underscore the superior performance and reliability of our approach in both theoretical and practical applications.
\section{DISCUSSION}
\label{sec:conclusion}

We introduce a novel approach, \name, for mixed input. \name focuses on capturing structured distance information between categorical inputs through estimating EDM from a series of base distance matrices. This approach has demonstrated superior performance, particularly in data-limited scenarios, by optimizing the model's ability to learn meaningful relationships even with sparse data. While \name has shown its effectiveness across a diverse set of problems, several questions and potential areas for improvement remain.

The limitation of our current work is that, while \name is designed to be highly effective in data-sparse environments, its performance in scenarios with extremely high-dimensional categorical spaces requires further exploration. As the number of categories increases, the complexity of accurately estimating relative similarities also increases, potentially affecting the model’s scalability. Future work could explore adaptive mechanisms that dynamically adjust the dimensionality of the embedding space based on the complexity of the input space.
\section*{Acknowledgments}
This work was supported in part by the National
Natural Science Foundation of China under Grant 72101106;
in part by the Shenzhen Science and Technology Program under Grant RCBS20210609103119020. S. H. Ng’s work was partially supported by the Singapore Ministry of Education (MOE) Academic Research Fund (AcRF) [R-266-149--114].
\bibliographystyle{abbrvnat}
\bibliography{reference}

\clearpage
\onecolumn

\hsize\textwidth\linewidth\hsize\toptitlebar
{\centering{\Large\bfseries Weighted Euclidean Distance Matrices over Mixed Continuous and Categorical Inputs for Gaussian Process Models \\ Supplementary Materials \par}}
\bottomtitlebar
\appendix
\section{Theoretical analysis of EDM}
\label{sec:linear}
\subsection{Proof of proposition \ref{proposition:linearindependent}}
\label{subsec:edm}
To establish that $D$ is also a valid EDM, we utilize the characterization of EDMs through their associated Gram matrices. Recall that for any EDM $D$, there exists a Gram matrix $G$ such that
$$
G=-\frac{1}{2} H D H,
$$
where $H=I-\frac{1}{n} \mathbf{e} \mathbf{e}^{\top}$ is the centering matrix, $I$ is the identity matrix, and $\mathbf{e}$ is the column vector of ones. The matrix $G$ is symmetric positive semidefinite (PSD) if and only if $D$ is an EDM.
For each $D^{(i)}$, since it is an EDM, the corresponding Gram matrix $G^{(i)}$ is given by
$$
G^{(i)}=-\frac{1}{2} H D^{(i)} H
$$
and $G^{(i)}$ is PSD. Consider the Gram matrix associated with $D$ :
$$
G=-\frac{1}{2} H D H=-\frac{1}{2} H\left(\sum_{i=1}^m w_i D^{(i)}\right) H=\sum_{i=1}^m w_i\left(-\frac{1}{2} H D^{(i)} H\right)=\sum_{i=1}^m w_i G^{(i)} .
$$
Since each $G^{(i)}$ is PSD and $w_i \geq 0$, their weighted sum $G$ is also PSD. This implies that there exists a configuration of points $\left\{x_1, x_2, \ldots, x_n\right\}$ in a Euclidean space such that $D_{j k}=\| x_j-$ $x_k \|^2$ for all $j, k$. Therefore, $D$ is a valid Euclidean distance matrix, as it satisfies the necessary condition through its PSD Gram matrix $G$.

\subsection{Proof of proposition \ref{prop:extreme}}
It can be directly derived from definition \ref{form:extreme direction} and lemma \ref{lemma:cara}.
\label{subsec:any_edm}

\section{Theoretical analysis of WEGP}
\label{sec:converge}

In theoretical analysis, we discuss the Matern kernel. Theorem \ref{main theorem} is derived from the Matern kernel.

\begin{definition}[WEGP with Matern Kernel]
Our kernel is defined by modifying the distance measure of the Matérn kernel. Our kernel function $K\left(\mathbf{z}_p, \mathbf{z}_q \mid \sigma, \boldsymbol{\theta},\left\{\boldsymbol{w}_k\right\}\right) $ is defined as:
\begin{equation}
\label{eq:kernel}
    K\left(\mathbf{z}_p, \mathbf{z}_q \mid \sigma, \boldsymbol{\theta},\left\{\boldsymbol{w}_k\right\}\right)=\sigma_0^2 \frac{1}{\Gamma(\nu) 2^{\nu-1}}\left(\frac{d\left(\mathbf{z}_p, \mathbf{z}_q\right)}{\theta}\right)^\nu K_\nu\left(\frac{d\left(\mathbf{z}_p, \mathbf{z}_q\right)}{\theta}\right)
\end{equation}
where $\sigma_0^2$ is the process variance parameter; $d(\mathbf{z}_p, \mathbf{z}_q)$ is the distance between the inputs $\mathbf{z}_p$ and $\mathbf{z}_q$, defined as:
\begin{equation}
\label{eq:distance}
d\left(\mathbf{z}_p, \mathbf{z}_q\right)=\sqrt{d_{\mathbf{x}}\left(\mathbf{x}_p, \mathbf{x}_q\right)^2+\sum_{k=1}^c D_{k,}{}_{p q}},
\end{equation}
where $d_{\mathbf{x}}\left(\mathbf{x}_p, \mathbf{x}_q\right)$ is the Euclidean distance between continuous input, $D_{k,}{}_{p q}$ is the estimated Euclidean distance between categories $\mathbf{h}_p^{(k)}$ and $\mathbf{h}_q^{(k)}$.
\end{definition}

\begin{lemma}
\label{lemma:semi-defi}
    The kernel function $K(\mathbf{z}_p, \mathbf{z}_q)$ is positive semi-definite.
\end{lemma}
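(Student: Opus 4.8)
The plan is to prove positive semi-definiteness of $K(\mathbf{z}_p, \mathbf{z}_q)$ by exhibiting it as a product of valid kernels, thereby reducing the problem to the known positive semi-definiteness of the Matérn kernel together with the EDM structure established earlier. First I would recall the structure of the overall kernel from Eq.~\ref{eq:kernel_com}: it factors multiplicatively as $K = \sigma_0^2 R(\mathbf{x}_p,\mathbf{x}_q\mid\boldsymbol{\theta})\prod_{k=1}^c R_k(\mathbf{h}_p^{(k)},\mathbf{h}_q^{(k)}\mid\boldsymbol{w}_k)$. Since the Schur product theorem guarantees that the Hadamard (elementwise) product of positive semi-definite Gram matrices is again positive semi-definite, and a nonnegative scalar multiple $\sigma_0^2$ preserves this property, it suffices to show that each factor is individually a valid positive semi-definite kernel.

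The key steps, in order, would be as follows. First, I would handle the continuous factor $R(\mathbf{x}_p,\mathbf{x}_q\mid\boldsymbol{\theta})=\exp(-\sum_{j=1}^d \theta_j\|x^{(j)}_p-x^{(j)}_q\|^2)$, which is a product of one-dimensional squared-exponential kernels; each is positive semi-definite by the classical Bochner/Gaussian-kernel argument, and their product is positive semi-definite again by the Schur product theorem. Second, for each categorical factor $R_k(\mathbf{h}_p^{(k)},\mathbf{h}_q^{(k)}\mid\boldsymbol{w}_k)=\exp(-D_{k,pq})$, I would invoke Proposition~\ref{proposition:linearindependent}: because each weight $w_k^{(i)}\geq 0$, the matrix $D_k=\sum_i w_k^{(i)} D_k^{(i)}$ is a valid EDM. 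By Schoenberg's theorem, $\exp(-t D_{k,pq})$ is a positive semi-definite kernel for every $t>0$ precisely when $D_k$ is an EDM (equivalently, when $D_k$ is conditionally negative definite), so $R_k$ is positive semi-definite. Third, I would assemble these facts: the product over $k$ of the $R_k$, times $R$, times $\sigma_0^2$, remains positive semi-definite by repeated application of the Schur product theorem and nonnegative scaling.

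For the Matérn form in Eq.~\ref{eq:kernel}, the cleanest route is to note that the function $d\mapsto \frac{1}{\Gamma(\nu)2^{\nu-1}}(d/\theta)^\nu K_\nu(d/\theta)$ is a radial positive-definite function on Euclidean space, and that the combined distance $d(\mathbf{z}_p,\mathbf{z}_q)$ in Eq.~\ref{eq:distance} is the genuine Euclidean distance in the joint embedding space obtained by concatenating the continuous coordinates with the Euclidean point configurations realizing each EDM $D_k$ (such configurations exist precisely because each $D_k$ is a valid EDM, again by Proposition~\ref{proposition:linearindependent}). Thus $d$ is a bona fide Euclidean metric on an embedded space, and the Matérn kernel evaluated at a Euclidean distance is positive semi-definite by Schoenberg's characterization of radial positive-definite functions.

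The main obstacle, and the step I would treat most carefully, is justifying that $\exp(-D_{k,pq})$ (and more generally the Matérn kernel under the mixed distance $d$) is positive semi-definite rather than merely that $D_k$ is a valid EDM. The gap is that "being an EDM" and "inducing a positive semi-definite kernel upon exponentiation" are linked through Schoenberg's theorem on conditionally negative definite functions, so I would state that equivalence explicitly and verify its hypotheses (symmetry, zero diagonal, and the conditionally-negative-definite condition on $\mathcal{N}(\mathbf{1}^{\mathrm{T}})$, which is exactly the Gram-matrix PSD condition from the proof of Proposition~\ref{proposition:linearindependent}). Once that bridge is in place, the remaining combination of factors is routine via the Schur product theorem.
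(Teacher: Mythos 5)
Your proof is correct, and it shares its central idea with the paper's proof --- both exploit the fact (Proposition~\ref{proposition:linearindependent}) that the weighted sum $D_k$ is a genuine EDM and hence realizable by a Euclidean embedding $\Psi_k$ of the categories --- but you execute it differently in two respects. First, for the product-form kernel of Eq.~\ref{eq:kernel_com} you invoke Schoenberg's equivalence between EDMs (conditionally negative definite matrices) and positive semi-definiteness of $\exp(-tD)$, combined with the Schur product theorem; the paper instead pushes the categories through the embedding and views each categorical factor as a Gaussian kernel on the embedded points. These two bridges are equivalent (Schoenberg's theorem is essentially the embedding argument in disguise), so this difference is cosmetic. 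Second, and more substantively, for the Matérn kernel of Eq.~\ref{eq:kernel} --- which is the kernel the lemma actually concerns in the appendix --- you correctly observe that it does \emph{not} factor into continuous and categorical parts, since $K_\nu$ evaluated at $\sqrt{a^2+b^2}$ is not a product of a term in $a$ and a term in $b$; you therefore argue via the joint embedding: the distance $d(\mathbf{z}_p,\mathbf{z}_q)$ of Eq.~\ref{eq:distance} is the true Euclidean distance on the concatenation of the continuous coordinates with the embedded categorical coordinates, and the Matérn radial function is positive definite there. The paper's own proof instead closes with ``the product of positive semidefinite kernels remains positive semidefinite,'' which implicitly assumes a product decomposition that the kernel of Eq.~\ref{eq:kernel} does not possess; your joint-embedding route avoids this gap and is the cleaner argument for that case. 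One refinement worth making explicit: the Matérn function is positive definite on $\mathbb{R}^n$ for every dimension $n$ (its spectral density is strictly positive in all dimensions), so your argument is insensitive to the size of the embedding space $\mathbb{R}^{d+\sum_k c_k(c_k-1)/2}$, which is what licenses applying Schoenberg's characterization after concatenation.
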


\begin{proof}
According to the definition of kernel in Eq. \ref{eq:kernel}, by Bochner's theorem~\citep{rasmussen2003gaussian}, the Matern kernel on $\mathbf{x}_p$ is positive semidefinite.
For the categorical part, each categorical variable is equipped with valid Euclidean distances $D_{k, p q}^{(i)}$ . That means we can embed the categorical variable with $c_k$ choices into Euclidean space $\mathbb{R}^{c_k(c_k-1)/2}$ such that
$$
\sum_{i=1}^{m_k} w_k^{(i)} D_{k, p q}^{(i)} \longleftrightarrow\left\|\Psi_k\left(\mathbf{h}_p^{(k)}\right)-\Psi_k\left(\mathbf{h}_q^{(k)}\right)\right\|^2
$$
up to a constant scaling factor. By this transformation, we can view the kernel be the continuous kernel with inputs $\Psi_k\left(\mathbf{h}_p^{(k)}\right)$. Such a kernel is positive semidefinite.
Since the product of positive semidefinite kernels remains positive semidefinite, multiplying the continuous part by the categorical parts preserves positive semidefiniteness.
\end{proof}
\begin{proof}[Proof of Theorem \ref{main theorem}]
Since the GP posterior mean is the optimal interpolant in $\mathcal{H}_\nu(\mathcal{Z})$, we have

$$
\left|f(\mathbf{z})-\mu_n(\mathbf{z}; \nu)\right| \leq s_n(\mathbf{z}; \nu)\|f\|_{\mathcal{H}_\nu(\mathcal{Z})}
$$
By the fill distance assumption, for every $\mathbf{z} \in \mathcal{Z}$ there exists a sampled point $z_n$ with
$$
\left\|\mathbf{z}-\mathbf{z_i}\right\| \leq h
$$
According to \citep{stein2012interpolation} the Matérn kernel $K_\nu$ is $C^k$ with $k$ the largest integer less than $2 \nu$ and, near the origin, admits a $k$ th-order Taylor approximation $P_k$ satisfying

$$
\left|K_\nu(\mathbf{z})-P_k(\mathbf{z})\right|=O\left(\|\mathbf{z}\|^{2 \nu}(-\log \|\mathbf{z}\|)^{2 \alpha}\right) \quad \text { as } z \rightarrow 0
$$

for some $\alpha \geq 0$. In particular, since $K_\nu(0)=1$, for small $r=\left\|\mathbf{z}-\mathbf{z_i}\right\|$ we have
$$
1-K_\nu\left(\mathbf{z},\mathbf{z_i}\right)=O\left(r^{2(\nu \wedge 1)}(-\log r)^{2 \alpha}\right)
$$

Taking $r \leq h$ yields

$$
1-K_\nu\left(\mathbf{z},\mathbf{z_i}\right)=O\left(h^{2(\nu \wedge 1)}\left(\log \frac{1}{h}\right)^{2 \alpha}\right)
$$
Standard Kriging theory then implies
$$
s_n^2(\mathbf{z} ; \nu) \leq 2\left(1-K_\nu\left(\mathbf{z},\mathbf{z_i}\right)\right) \leq C_1 h^{2(\nu \wedge 1)}\left(\log \frac{1}{h}\right)^{2 \alpha}
$$
Taking square roots, we obtain

$$
s_n(\mathbf{z}; \nu) \leq \sqrt{C_1} h^{\nu \wedge 1}\left(\log \frac{1}{h}\right)^\alpha
$$

Setting $\beta=\alpha$ (or an equivalent exponent as determined by the precise kernel properties) completes the proof.
\end{proof}

\section{Model performance measured by log likelihood}
\label{sec:log_likelihood}
\begin{figure*}[h]
    \centering

    \includegraphics[width=0.24\textwidth]{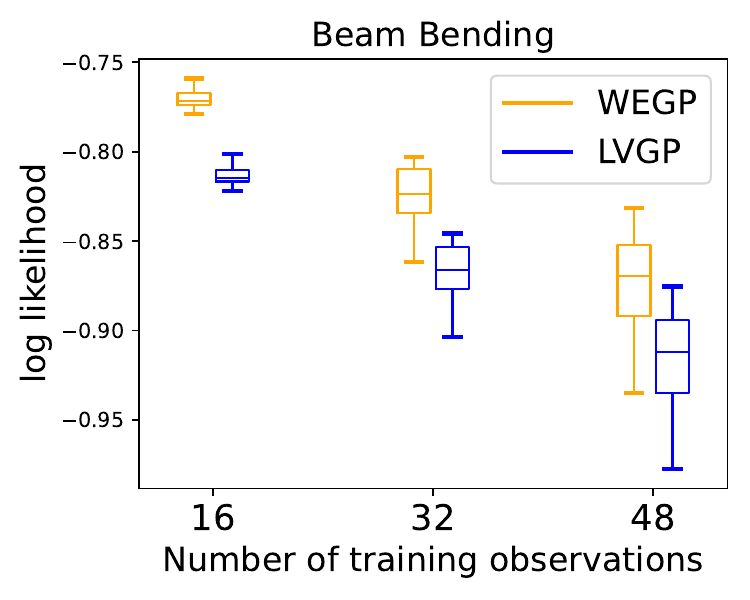}
    \includegraphics[width=0.24\textwidth]{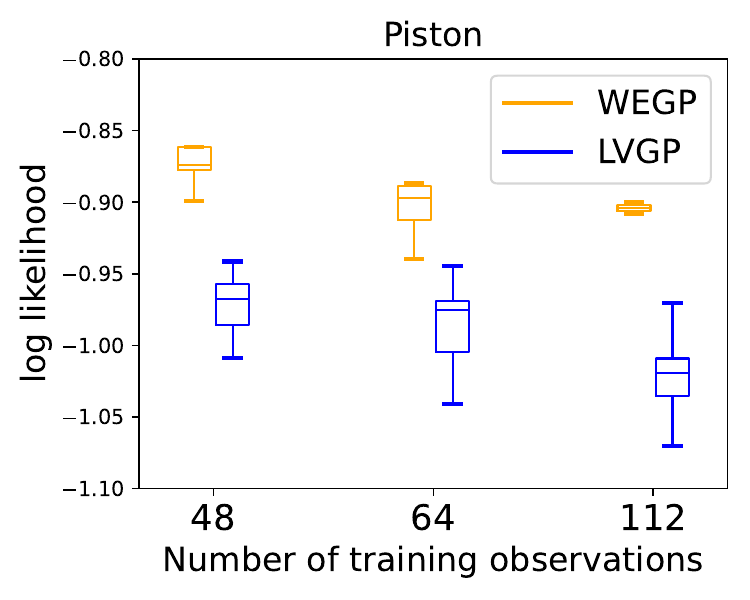}
    \includegraphics[width=0.24\textwidth]{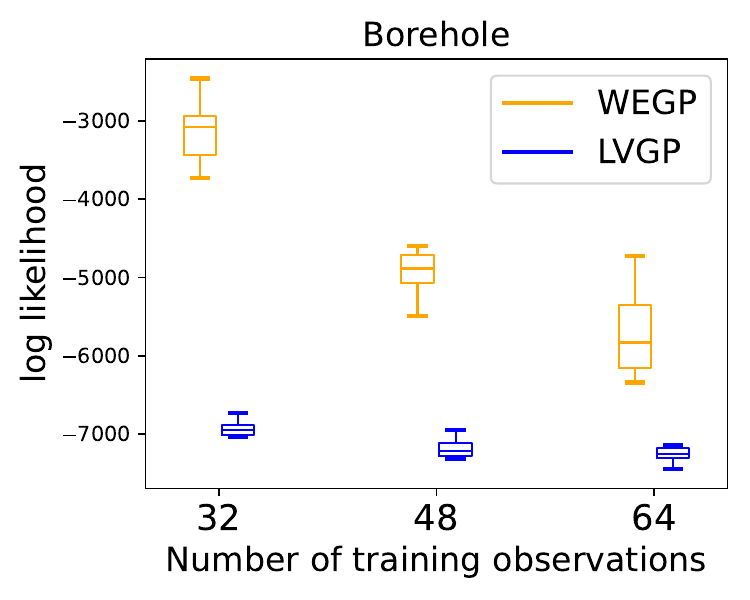}
    \includegraphics[width=0.24\textwidth]{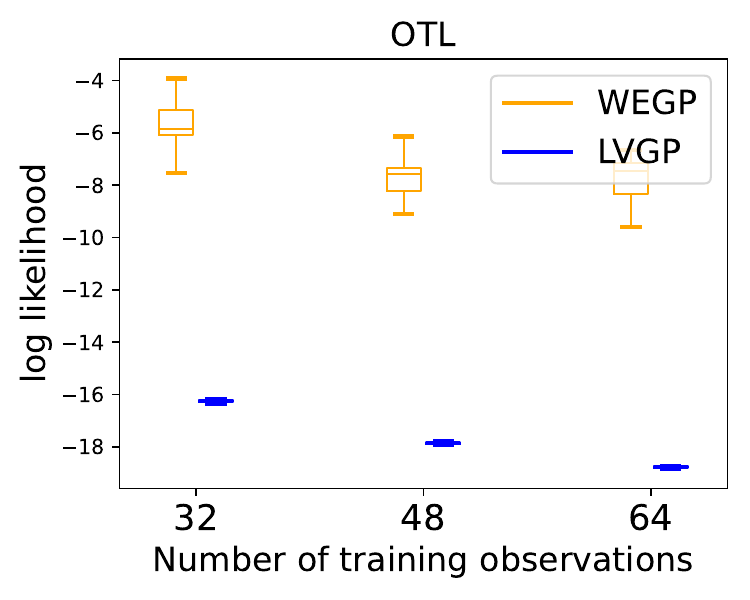}

    \caption{WEGP model accuracy comparison by log likelihood by three different sizes of training data}
    \label{fig:ll}
\end{figure*}
Figure \ref{fig:ll} compares the performance LVGP and WEGP using log likelihood. A higher log likelihood indicates a better fit to the data, suggesting lower predictive uncertainty and higher predictive accuracy.

From the four benchmark problems—Beam Bending, Piston, Borehole, and OTL—evaluated at different training set sizes, we observe that WEGP consistently achieves higher log likelihood values compared to LVGP. This indicates that WEGP provides a better fit and higher predictive performance on these test cases.

\section{Test functions}
\label{sec:test_function}

\subsection{Test functions of comparing model accuracy}
\label{subsec:model test}
\subsubsection{Analytical test functions}
Table \ref{table:test} summarizes the analytical functions used for comparing WEGP to LVGP. The functions possess a wide range of dimensionality and complexity. 

\begin{table}[h]
\renewcommand{\arraystretch}{1.5}
\centering

\begin{tabular}{ll}
\hline
{Function} & {Description} \\
\hline
1 - OTL Circuit & 
$y = \left( V_{b1} + 0.74 \right) \beta (R_{c2} + 9) + 11.35 R_f + \frac{0.74 R_f \beta (R_{c2} + 9)}{\beta (R_{c2} + 9) + R_f} R_{c1}$ \\
& $V_{b1} = \frac{12 R_{b2}}{R_{b1} + R_{b2}}$ \\
\hline
2 - Piston Simulator & 
$y = 2\pi \sqrt{\frac{M}{k + \frac{S^2 P_0 V_0 T}{T_0 V^2}}}$ \\
& $V = \frac{S}{2k} \left( A^2 + \frac{4kT}{T_0} \right), \quad A = P_0 S + 19.62 M - \frac{k V_0}{S}$ \\
\hline
3 - Borehole & 
$y = \frac{2 \pi T_u (H_u - H_l)}{\ln\left(\frac{r}{r_0}\right) \left( 1 + \frac{2 L T_u}{\ln\left(\frac{r}{r_0}\right) r_0^2 k_w} + \frac{T_u}{T_l} \right)}$ \\
\hline

\end{tabular}
\caption{Analytical test functions.}
\label{table:test}
\end{table}\textbf{}

The input variables for four engineering function are summarized in the Table \ref{table:config}. These tables outline the range of values that each variable can take, categorized into both quantitative and categorical inputs.

\begin{table}[h]
\renewcommand{\arraystretch}{1.5}
\centering
\begin{tabular}{c|c|c}
\hline ID & Variables  & Min, Max \\
\hline 1 & $R_{b 1}, R_{b 2}, \textcolor{red}{R_f}, R_{c 1}, R_{c 2}, \textcolor{red}{\beta}$ & \begin{tabular}{l}
\begin{tabular}{c}
{$[50,25,\textcolor{red}{0.5},1.2,0.25,\textcolor{red}{30}]$,} \\
    {$[150,70,\textcolor{red}{3},2.5,1.20,\textcolor{red}{50}]$}
\end{tabular}
\end{tabular} \\
\hline 2 & $M, S, V_0, \textcolor{red}{k}, \textcolor{red}{P_0}, T, T_0$ & \begin{tabular}{l}
$[30,0.005,0.002,\textcolor{red}{1000},\textcolor{red}{90000},290,340]$, \\
$[60,0.02,0.01,\textcolor{red}{5000},\textcolor{red}{110000},296,360]$
\end{tabular} \\
    \hline 3 & $T_u, H_u, \textcolor{red}{H_l}, r, \textcolor{red}{r_w}, T_l, L, K_w$ & \begin{tabular}{l}
\begin{tabular}{c}
{$[63070,990,\textcolor{red}{700},100,\textcolor{red}{0.05},63.1,1120,9855]$,} \\
{$[115600,1110,\textcolor{red}{820},50000,\textcolor{red}{0.15},116,1680,12045]$}
\end{tabular}

\end{tabular} \\
\hline
\end{tabular}
\caption{Analytical test functions input descriptions}
\label{table:config}
\end{table}
The quantitative variables are presented in black, and the categorical variables are highlighted in red. The range for each variable is carefully defined to represent realistic operating conditions and ensure meaningful analysis. The categorical variables are discretized into four equally spaced categories.

\subsubsection{Real world test function}
Beam Bending problem is a non-equally spaced real-world problem \citep{zhang2020latent}. It is a classical engineering problem where the categorical variable is the cross-sectional shape of the beam, which has five categories: circular, square, I-shape, hollow square, and hollow circular. The categories are not equally spaced due to the varying moments of inertia $I(t)$ associated with each shape. Table \ref{tab:table 3} is a summary of the cross-sectional shapes and their corresponding normalized moments of inertia. As shown in the table, the normalized moments of inertia $I(t)$ vary significantly across different cross-sectional shapes, resulting in non-equally spaced categories. This non-uniform spacing reflects the distinct impact each shape has on the beam's deformation.
\begin{table}[]
\renewcommand{\arraystretch}{1.5}

    \centering
    \begin{tabular}{cccccc}
\hline$I(t)$ & Circular & Square & I-shape & Hollow Square & Hollow Circular \\
\hline Value & 0.0491 & 0.0833 & 0.0449 & 0.0633 & 0.0373\\
\hline
\end{tabular}
    \caption{Beam bending problem}
    \label{tab:table 3}
\end{table}

\subsection{Test functions for optimization tasks.}
\label{subsec:optimization_test}
\subsubsection{Synthetic Test Functions}
\label{sec:syntest}
We use several synthetic test functions: \textbf{Func-2C}, \textbf{Func-3C}, and \textbf{Ackley-cC}, whose input spaces comprise both continuous and categorical variables. Each of the categorical inputs in the three test functions has multiple values.

\begin{itemize}
    \item \textbf{Func-2C} is a test problem with 2 continuous inputs $(d=2)$ and 2 categorical inputs $(c=2)$. The categorical inputs decide the linear combinations between three 2-dimensional global optimisation benchmark functions: \textbf{Beale (bea)}, \textbf{Six-Hump Camel (cam)}, and \textbf{Rosenbrock (ros)}.
    \item \textbf{Func-3C} is similar to Func-2C but with 3 categorical inputs $(c=3)$, which leads to more complicated linear combinations among the three functions.
    \item \textbf{Ackley4C} includes $c = \{4\}$ categorical inputs and 3 continuous inputs $(d=3)$. The categorical dimensions are transformed into 3 categories.
\end{itemize}

The value ranges for both continuous and categorical inputs of these functions are summarised in Table \ref{table:synthetic}.

\begin{table}[h]
\renewcommand{\arraystretch}{1.5}
\centering
\begin{tabular}{c|c|c}
\hline
\textbf{Function} & \textbf{Inputs} $\mathbf{z}=[\mathbf{h}, \mathbf{x}]$ & \textbf{Input values} \\
\hline
\text{Func2C} & $h_1$ & $\{\operatorname{ros}(\mathbf{x}), \operatorname{cam}(\mathbf{x}), \operatorname{bea}(\mathbf{x})\}$ \\
(d=2, c=2) & $h_2$ & $\{+\operatorname{ros}(\mathbf{x}),+\operatorname{cam}(\mathbf{x}),+\operatorname{bea}(\mathbf{x}),+\operatorname{bea}(\mathbf{x}),+\operatorname{bea}(\mathbf{x})\}$ \\
& $\mathbf{x}$ & $[-1,1]^2$ \\
\hline
\text{Func3C} & $h_1$ & $\{\operatorname{ros}(\mathbf{x}), \operatorname{cam}(\mathbf{x}), \operatorname{bea}(\mathbf{x})\}$ \\
(d=2, c=3) & $h_2$ & $\{+\operatorname{ros}(\mathbf{x}),+\operatorname{cam}(\mathbf{x}),+\operatorname{bea}(\mathbf{x}),+\operatorname{bea}(\mathbf{x}),+\operatorname{bea}(\mathbf{x})\}$ \\
& $h_3$ & $\{+5 \times \operatorname{cam}(\mathbf{x}),+2 \times \operatorname{ros}(\mathbf{x}),+2 \times \operatorname{bea}(\mathbf{x}),+3 \times \operatorname{bea}(\mathbf{x})\}$ \\
& $\mathbf{x}$ & $[-1,1]^2$ \\
\hline
\text{Ackley4C} & $h_i$ \text{ for } $i=1,2, 3, 4$ & $h_i\in\{0,0.5,1\}$ \\
(d=3, c=4) & $\mathbf{x}$ & $[-1,1]^3$ \\
\hline
\end{tabular}
\caption{Input descriptions for the synthetic test functions}
\label{table:synthetic}
\end{table}

\subsubsection{MLP Hyperparameter Tuning}
We defined a real-world task of tuning hyperparameters for an MLP (Multi-layer Perceptron) regressor on the \textbf{California Housing} dataset. This problem involves 3 categorical inputs and 3 continuous inputs. The output is the negative mean squared error (MSE) on the test set of the \textbf{California Housing} dataset.

The MLP model is built using \texttt{MLPRegressor} from \texttt{scikit-learn}, and the hyperparameters include the following:

\begin{itemize}
    \item \textbf{Activation function (ht1)}: This categorical input takes values from \{\texttt{logistic}, \texttt{tanh}, \texttt{relu}\}.
    \item \textbf{Learning rate schedule (ht2)}: This categorical input includes \{\texttt{constant}, \texttt{invscaling}, \texttt{adaptive}\}.
    \item \textbf{Solver (ht3)}: This categorical input can take one of \{\texttt{sgd}, \texttt{adam}, \texttt{lbfgs}\}.
    \item \textbf{Hidden layer size ($x_1$)}: This continuous input varies between 1 and 100.
    \item \textbf{Regularization parameter (alpha, $x_2$)}: This continuous input lies in the range $[10^{-6}, 1]$.
    \item \textbf{Tolerance for optimization ($x_3$)}: This continuous input varies between 0 and 1.
\end{itemize}

The value ranges for both continuous and categorical inputs for this problem are shown in Table \ref{table:realworld}.

\begin{table}[h]
\renewcommand{\arraystretch}{1.5}
\centering
\begin{tabular}{c|c|c}
\hline
\textbf{Problems} & \textbf{Inputs} $\mathbf{z}=[\mathbf{h}, \mathbf{x}]$ & \textbf{Input values} \\
\hline
\text{MLP-CaliHousing} & \text{activation function (ht1)} & $\{\text{logistic, tanh, relu}\}$ \\
(d=3, c=3) & \text{learning rate (ht2)} & $\{\text{constant, invscaling, adaptive}\}$ \\
& \text{solver (ht3)} & $\{\text{sgd, adam, lbfgs}\}$ \\
& $x_1$ & $[1, 100]$ \\
& $x_2$ (\text{alpha}) & $[10^{-6}, 1]$ \\
& $x_3$ (\text{tolerance}) & $[0, 1]$ \\
\hline
\end{tabular}
\caption{Input ranges for the real-world problem}
\label{table:realworld}
\end{table}

\subsubsection{SVM Hyperparameter Tuning}

SVM hyperparameter tuning task in this paper involves tuning two continuous hyperparameters and one categorical hyperparameter for a Support Vector Regressor (SVR) on the \textbf{California Housing} dataset. The SVR model is implemented using \texttt{SVR} from \texttt{scikit-learn} and is evaluated via cross-validation. The optimization objective is to minimize the logarithm of the mean squared error (log MSE) on the dataset.

The hyperparameters include:

\begin{itemize} \item \textbf{Regularization parameter ($C$)}: This continuous input takes values from $[0.1, 100]$. \item \textbf{Epsilon ($\epsilon$)}: This continuous input varies between $0.1$ and $100$. \item \textbf{Kernel type}: This categorical input can take one of the following values: {\texttt{poly}, \texttt{rbf}, \texttt{sigmoid}, \texttt{linear}}. \end{itemize}

The input ranges for this real-world task are summarized in Table \ref{table:svm}.

\begin{table}[h]
\renewcommand{\arraystretch}{1.5}
\centering 
\begin{tabular}{c|c|c} 
\hline \textbf{Problems} & \textbf{Inputs} $\mathbf{z}=[\mathbf{h}, \mathbf{x}]$ & \textbf{Input values} \\
\hline 
\text{SVR} & $C$ & $[0.1, 100]$ \\
(d=2, c=1) & $\epsilon$ & $[0.1, 100]$ \\
& \text{Kernel type} & ${\text{poly, rbf, sigmoid, linear}}$ \\
\hline 
\end{tabular} 
\caption{Input ranges for the SVR hyperparameter tuning problem}
\label{table:svm}
\end{table}

\end{document}